\newcommand{\br}[1]{\left\{#1\right\}}                            
\newcommand{\activecoreset}{\textsc{Active-Coreset}}
\newcommand{\REAL}{\ensuremath{\mathbb{R}}}
\newcommand{\Ourspan}{\mathrm{span}}
\newcommand{\mvecoreset}{\textsc{MVE-Coreset}}
\newcommand{\term}[1]{\left( #1 \right)}
\DeclareMathOperator*{\argmax}{arg\,max}
\newcommand{\mynorm}[1]{\left\| #1 \right\|}
\DeclareExpandableDocumentCommand{\oursqrt}{m}
 {
  \fp_eval:n { round ( sqrt(#1) ) }
 }
\newtheorem{theorem}{Theorem}
\newtheorem{lemma}[theorem]{Lemma}
\newtheorem{definition}[theorem]{Definition}
\title{\LARGE \bf
Obstacle Aware Sampling for Path Planning
}
\author{Murad Tukan \and Alaa Maalouf \and Dan Feldman \and Roi Poranne 
\thanks{All authors are with the Computer Science Department, University of Haifa, Haifa 3498838, Israel
         {\tt\small \{muradtuk, alaamalouf12, dannyf.post, roi.poranne\}@gmail.com}}
\thanks{This work has been submitted to the IEEE for possible publication. Copyright may be transferred without notice, after which this version may no longer be accessible}
}
\begin{document}

\newcommand{\approxcoreset}{\textsc{Approx-MVE-Coreset}}
\newcommand{\far}{\textsc{Farthest}}
\newcommand{\oracle}{\mathrm{oracle}}
\newcommand{\direction}{direction}
\newcommand{\true}{\mathrm{true}}
\newcommand{\abs}[1]        {\left| #1\right|}
\newcommand{\conv}[1]{\mathrm{conv}\term{#1}}
\newcommand{\false}{\mathrm{false}}
\newcommand{\eps}{\varepsilon}
\newcommand{\vol}{\mathrm{vol}}
\newcommand{\mvee}{\mathrm{mvee}}

\newcommand{\X}{\mathcal{X}}
\newcommand{\Xf}{\mathcal{X}_{\textbf{free}}}
\newcommand{\Xo}{\mathcal{X}_{\textbf{obs}}}

\maketitle
\thispagestyle{empty}
\pagestyle{empty}

\begin{abstract}
Many path planning algorithms are based on sampling the state space.
While this approach is very simple, it can become costly when the obstacles are unknown, since samples hitting these obstacles are wasted.
The goal of this paper is to efficiently identify obstacles in a map and remove them from the sampling space.
To this end, we propose a pre-processing algorithm for space exploration that enables more efficient sampling.
We show that it can boost the performance of other space sampling methods and path planners. 

Our approach is based on the fact that a convex obstacle can be approximated provably well by its minimum volume enclosing ellipsoid (MVEE), and a non-convex obstacle may be partitioned into convex shapes.
Our main contribution is an algorithm that strategically finds a small sample, called the \emph{active-coreset}, that adaptively samples the space via membership-oracle such that the MVEE of the coreset approximates the MVEE of the obstacle.  
Experimental results confirm the effectiveness of our approach across multiple planners based on Rapidly-exploring random trees, showing significant improvement in terms of time and path length. 
\end{abstract}

\section{Introduction}
Path finding is one of the oldest problems in robotics.
The goal of path planning is to find a feasible path from an initial state to a final state, that does not collide with any obstacles.
The main challenge stems from the usually immense search space that is needed to explore, especially for the continuous problem.
A common approach is to cleverly \emph{sample} the state in hopes of finding a path. 
Indeed, sampling-based path planners, such as Rapidly-exploring Random Trees (RRTs)~\cite{lavalle1998rapidly}, and Probabilistic Road maps (PRMs)~\cite{kavraki1996probabilistic}, are a popular choice for path finding.

One limitation of sampling-based planners is that they allow sampling \emph{inside} obstacles, potentially leading to computational waste.
If the map contains very large obstacles, a path might not be found due to a limited number of iterations, or the convergence time of such path planners increases.
The problem is further exacerbated when the obstacles are not known in advance.
Mapping the obstacles out can be a considerable challenge on its own. 

The goal of this paper is to provide a \emph{pre-processing} algorithm that discovers obstacles and removes redundant areas from the state space, giving a form of \emph{conscience} to any sampling-based path planner, for faster convergence and possibly shorter generated paths.
It can potentially be used with any sampling-based path planning routine.
The challenge lies in that, contrary to the more common setting, the locations and shapes of the obstacles are hidden, and must be inferred using a membership oracle.
To do so, we efficiently bound the volume of an obstacle as soon as it is encountered, using the concept of a \emph{coreset} for a given implicit body approximation. 

\begin{figure}[htb!]
\begin{subfigure}[b]{.32\linewidth}
\centering
\includegraphics[width=\linewidth]{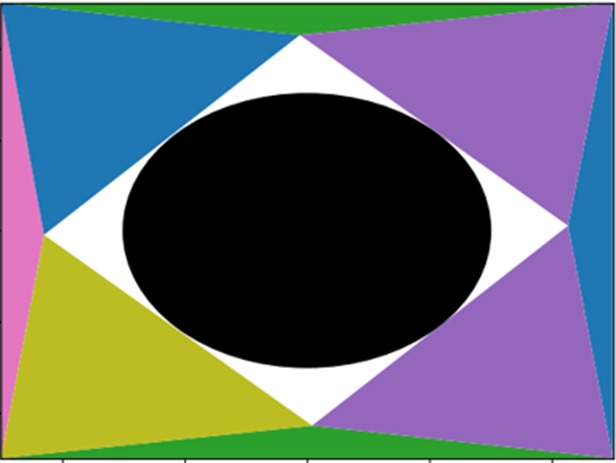}%
\end{subfigure}
\begin{subfigure}[b]{.32\linewidth}
\centering
\includegraphics[width=\linewidth]{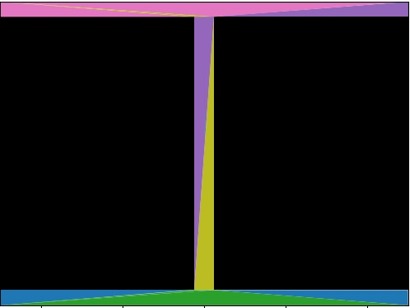}%
\end{subfigure}
\begin{subfigure}[b]{.32\linewidth}
\centering
\includegraphics[width=\linewidth]{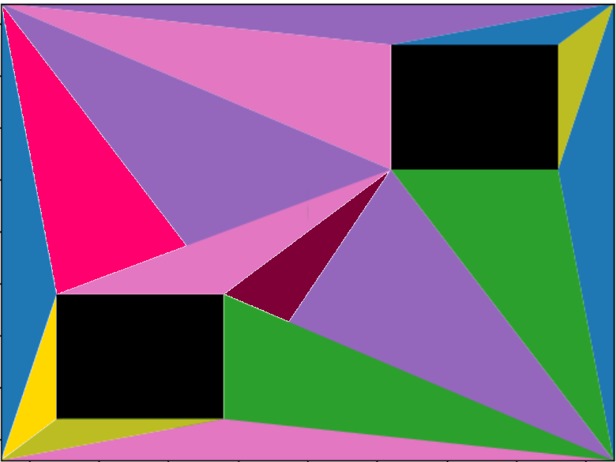}
\end{subfigure}
\caption{Partitioning the free state space using our methods. See original maps at Fig.~\ref{maps}.}
\label{fig:tri}
\end{figure}

The novelty of our approach lies in excluding explored obstacles from the state space.
This is by bounding each obstacle via a convex body using minimum calls to $\oracle$, and excluding the enclosing convex body from the state space. Our contribution is then threefold:

\begin{enumerate}
\item 
We propose a coreset for approximating the minimum volume enclosing ellipsoid (MVEE), i.e., we suggest an algorithm that computes a small subset $S$ of a given (probably infinite) set $P\subseteq\REAL^d$, such that the volume $\mathrm{vol}(\mvee(P))$ of the MVEE of $P$ is larger by a factor of at most $(1+\eps)$ of $\vol(\mvee(S))$ (Sec.~\ref{sec:coreset-details}).



\item We then extend our result towards enclosing the ellipsoid by a simplex $C$ (which is an approximation to the convex hull of the set $P$) to exclude $C$ (sets of infeasible states) from $\X$; see full details in section~\ref{covhullsec}. 

\item We define a novel technique for sampling from the new space which excludes the simplex $C$, by splitting the state space into regions using Delaunay triangulation (see Definition~\ref{def:delaunay}). Each region is chosen with probability proportional to the ratio of its volume. 
We then apply the sampling technique of the path planner from the region which has been chosen (from the previous sampling step); Fig.~\ref{fig:tri} shows the triangulation of the obstacle-free state space using our methods, and Fig.~\ref{fig:framework} sums up our methods illustratively.
\end{enumerate}

\subsection{Background and related work}
\noindent\textbf{The evolution of RRTs.} RRT-based algorithms were first proposed in~\cite{lavalle1998rapidly,lavalle2001randomized}, and ever-since they were widely leveraged by the robotics community. To ensure asymptotic optimality (of the final path), RRT* was suggested~\cite{karaman2010incremental,karaman2011sampling}, where it allows the inclusion of optimization metrics to improve the quality of the obtained solutions as the number of samples goes to infinity. Later, Informed-RRT*~\cite{gammell2014informed} was proposed as an improvement, here, when a feasible path is found, the planner starts searching for the final solution in an elliptical region inside the configuration space. 
\cite{nasir2013rrt} combine path biasing with rewiring techniques to suggest the RRT*-Smart algorithm, where, the main idea is to smooth and reduce the number of states in a founded path to its minimum number, and use these states as biases for further sampling. 
A numerous number of algorithms were suggested for improving RRT-based planners, giving rise to many variants~\cite{islam2012rrt,adiyatov2013rapidly,sintov2014time,naderi2015rt,otte2016rrtx,palmieri2016rrt,lai2019balancing}.
While most of these path planners aim to either shorten the path itself~\cite{petit2021rrt}, or to focus on certain areas leading to faster convergence~\cite{gammell2014informed}, it is hard to determine for a given map which path planner will perform better in terms of time or/and the length of the generated path.

\noindent\textbf{Coresets.} A coreset is (usually) a small weighted subset of the original input set that approximates a loss function for every feasible query up to a provable multiplicative error of $1 \pm \eps$, where $\eps \in (0, 1)$ is a given error parameter. The main idea is to be able to store data using small memory, and boost solvers by applying them on the coreset instead of the original data.
Coreset was first suggested by~\cite{agarwal2004approximating} in the context of computational geometry, and got increasing attention recently in various fields. 
For example in the context of machine learning machine learning~\cite{lucic2015strong,har2007maximum} coresets where suggested to improve the efficiency of widely used machine learning models such as regression~\cite{huggins2016coresets, munteanu2018coresets,karnin2019discrepancy}, matrix approximation~\cite{maalouf2019fast,feldman2010coresets,sarlos2006improved,maalouf2021coresets}, clustering~\cite{gu2012coreset,bachem2018one,jubran2020sets,schmidt2019fair}, $\ell_z$-regression~\cite{cohen2015lp, dasgupta2009sampling,nearconvex,sohler2011subspace}, \emph{SVM}~\cite{har2007maximum,tsang2006generalized,tsang2005core,tsang2005very,tukan2021coresets}, and  decision trees~\cite{jubran2021coresets}. 
In deep learning, the idea of coresets was leveraged for compressing deep neuronal networks~\cite{baykal2018data,liebenwein2019provable,9464761}, for robust training of neural networks against noisy labels~\cite{mirzasoleiman2020coresets} and for speeding up models training time~\cite{sinha2020small}.
computational geometry and shape approximation~\cite{agarwal2004approximating,kumar2005minimum}, and robotics~\cite{feldman2013k,nasser2020autonomous,volkov2017machine} etc.  For extensive surveys on coresets, we refer the reader to~\cite{feldman2020core, phillips2016coresets}, and to~\cite{ jubran2019introduction,maalouf2021introduction} for an introductory.

\section{Settings}
\label{ProbStat}
We first introduce our setting and necessary assumptions.

 \noindent\textbf{Obstacle.} We define an \emph{obstacle} as a convex set in $\REAL^d$. Note that a non-convex shape may be treated as the union of (hopefully few) convex sets. 
We also assume that the obstacle is not too small, otherwise, there is no benefit to finding it, as (with high probability) we should not be sampling many times in it.
More precisely, we assume that the obstacle contains a ball of radius $\eps$, for a given error parameter $\eps>0$. 

\noindent\textbf{Oracle} is a binary function $\oracle:\REAL^d\to\br{\true,\false}$ over the search space, where $\oracle(p)$ returns $\true$ if $p\in\REAL^d$ is inside an obstacle. For simplicity, we assume that a call to $\oracle$ takes $O(1)$ time, and focus on the asymptotic number of such query calls.

\noindent\textbf{Directional width.} The following definition is used to measure the width of an obstacle in a given direction. We denote by $\langle p,u \rangle$ the projection of the point $p\in \REAL^d$ on to the unit vector (direction) $u\in\REAL^d$.

\begin{definition}[$\omega(P,u)$]
For an obstacle $P$, and for any unit vector $u\in\REAL^d$, let $P[u]=\arg\max_{p\in P} \langle p,u \rangle$ be the extreme point in $P$ along $u$, then $\omega(P,u)=\langle P[u]-P[-u],u\rangle$ is called the \emph{directional width} of $P$ in the direction $u$.
\end{definition}

\noindent\textbf{Obstacles separation.} We must assume some minimal distance between obstacles, otherwise, there is no hope to distinguish between them. To keep the number of parameters small, we use $\eps$ and assume that we can expand an obstacle by a factor of $(1+\eps)$ while not hitting other objects. Formally, for every unit vector (direction) $u\in\REAL^d$, every pair of obstacles $P$ and $Q$, and for every pair of points $p\in P$ and $q\in Q$ on these obstacles, the projection $\langle p,u \rangle$  of $p$ on $u$ has distance of at least $\eps\omega(P,u)$ from the projection $\langle q,u \rangle$ of $q$ along $u$: 
$
\forall p\in P:\forall q\in Q: |\langle q,u \rangle-\langle p,u \rangle|>\eps \omega(P,u).$




\section{Method} 
Given a state space $\X$ that is composed of two sets $\X_{free}$ (the space of which the robot is allowed to pass in) and $\X_{obs}$ (the space that is covered by the obstacles), and a membership oracle $\oracle : \X \to \br{0,1}$ where for every $x \in \X$, $\oracle(x) = 0$ translates to $x \in \Xf$ and $1$ otherwise, the objective is to incrementally remove states from $\X$ that lie in $\Xo$.
The motivation is to assist the path planner cover more states in $\Xf$ to produce much better paths. This is extremely helpful when the obstacles are large and cover a lot of space.
We apply our algorithm as a preprocessing step, a sketch of it is given as follows.

\begin{enumerate}[label=(\roman*)]
    \item $x :=$ a sampled point from $\X$. \label{sketch:step1}
    \item If $\oracle(x) = 0$ ($x\in \Xf$), then go-to~\ref{sketch:step1} \label{sketch:step2}. Otherwise ($x\in \Xo$), our algorithm is invoked as follows:
    \begin{enumerate}[label=(\alph*)]
        \item $O :=$ compute a simplex which bounds the obstacle that contains $x$ using minimal calls to $\oracle$. \label{step1ofouralg}
        \item Remove the space that is covered by $O$ from the sampling space ($\X := \X \setminus O$) and go-to Step~\ref{sketch:step1}  \label{step2ofouralg}
    \end{enumerate}
\end{enumerate}
Note that our algorithm can also be applied on the fly during a normal run of the path planner, where the point $x$ from Step~\ref{sketch:step1} is sampled during execution. 

We define a convex hull of a set as follows:

\begin{definition}[convex hull]
\label{def:convexHull}
Let $P \subseteq \X$ be a (possibly infinite) set of points. Then, $\conv{P}$ is defined to be a subset of $P$ such that every $p\in P$ can be represent as a convex combination of the points in $\conv{P}$, i.e., for every $p \in P$, there exists $\Phi : \conv{P} \to [0,1]$ such that $\sum\limits_{q \in \conv{P}} \Phi(q) = 1$ and $p = \sum\limits_{q \in \conv{P}} \Phi\term{q} q$.
\end{definition}

\begin{figure}[htb!]
\includegraphics[width=\linewidth,height=.4\linewidth]{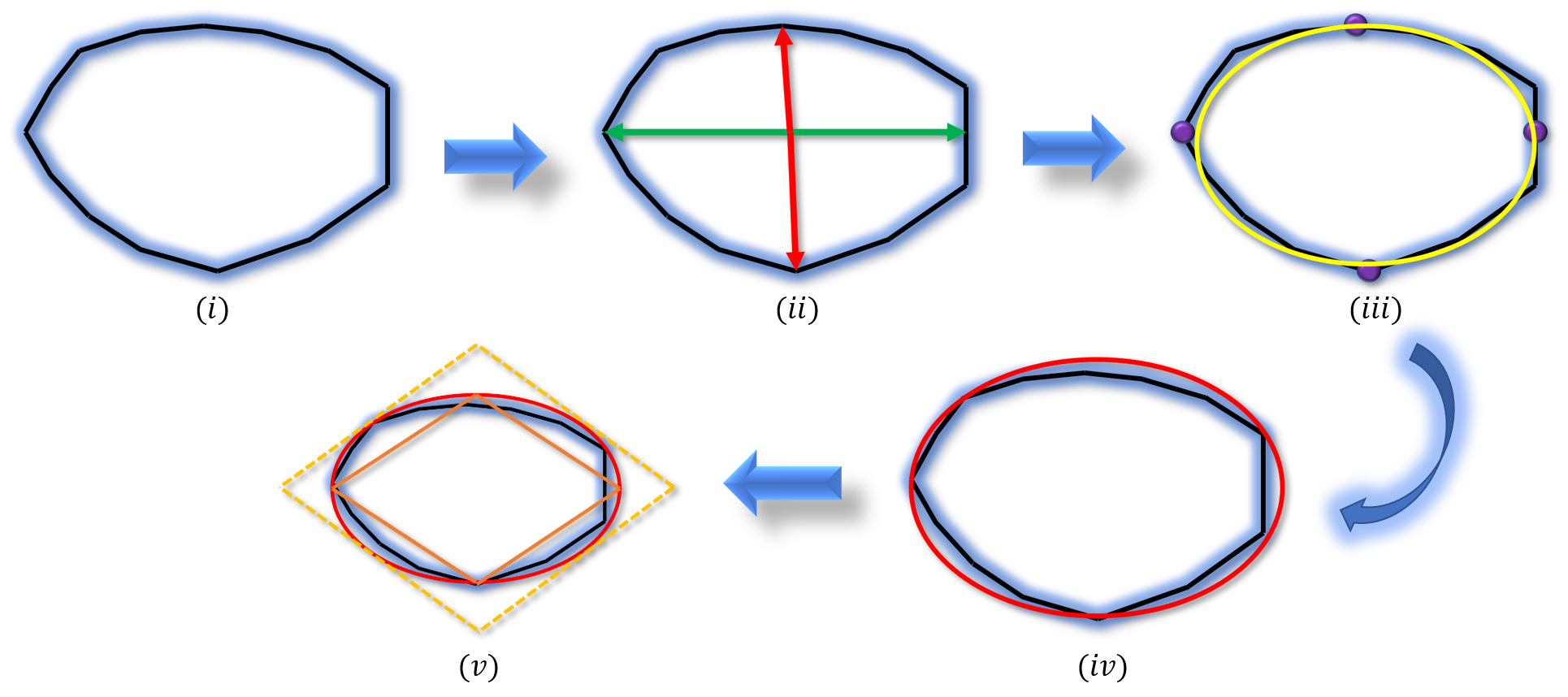}
\caption{Framework illustration for $d=2$: (i) a point is sampled from an obstacle (the blue shape), (ii) compute a $2d=4$ external points along $d=2$ orthogonal directions (the edges of the red lines and green line), (iii) compute the ellipsoid (in yellow) which passes through the set of $2d=4$ external points, this is not the minimum volume enclosing ellipsoid but a crude approximation to it, (iv) iteratively update the ellipsoid until it admits an $\eps$-approximation and finally (v) compute a bounding simplex with $O(d^{1.5}(1+\eps))$ approximation towards the volume of the convex body.}
\label{fig:framework}
\end{figure}

\noindent\textbf{Problem statement.} Given an infinite set of points $P \subseteq \X$ which is accessed using a membership oracle $\oracle : \X \to \br{0,1}$, the objective is to find a set $C \subseteq P$, $v \in P$ and some $\alpha \in [1, d]$ such that $\conv{C} \subseteq \conv{P} \subseteq \conv{\alpha^{1.5}\term{C - v} + v}.$
\subsection{Bounding obstacles - simple case: $d=1$\label{sec1}}
In this section, we give an overview of our method's execution on a one-dimensional space (the interval between $0$ and $1$). In this case the obstacles are linear segments on a line, and $\oracle:[0,1]\to\br{\true,\false}$ gets as input a scalar. While this scenario is not interesting by itself as there will be no path in the presence of an obstacle, it will help illustrate the main ideas of our core algorithm. 
Assume we sampled a point that lies inside the obstacle - the goal is to bound this obstacle in order to remove it from the sampling space (Steps~\ref{step1ofouralg} and~\ref{step2ofouralg}).
Once an obstacle is hit, we would like to find the extreme points (edge) of this obstacle based on a point $p$ inside the obstacle.
To do that, we run an exponential search (geometric sequence) of queries $p\pm 2^i\eps$, where $i=1,2,3,\cdots$. Here we used both assumptions from Section~\ref{sec1}: (i) that the minimum length of an obstacle is $\eps$, and (ii) each obstacle can be expanded by a multiplicative factor of $1 + \eps$ without hitting any other obstacles. These assumptions complete the correctness of the exponential search.
The number of iterations until the oracle returns $\false$, i.e., the first query point outside the obstacle is at most $\ln(1/\eps)$.
Using this search, we obtain a point $q$ that is outside the obstacle and of distance at most $x$ from its edge, where $x$ is the length of the obstacle.
We can then run a binary search on the interval between the outer point $q$ and the closest point to $q$ inside the obstacle that was returned by the oracle.
An $\eps$-precision of the actual extreme point of the object can be computed in additional $O(\ln(1/\eps))$ queries to the oracle.
We then repeat this binary search on each side of the obstacle.
The above process computes an $\eps$-approximation to the boundaries (convex hull) of the obstacle up to $\eps$-error which serves as our coreset.
It is easy to verify that the number of queries in each stage is minimal up to a constant factor that can be arbitrarily improved by changing the base of the log in the search. 





\subsection{Active-Learning MVEE Coresets for $d \geq 2$}\label{sec:coreset-details}

The one dimensional case $d=1$ is very simple and unique since every obstacle has exactly two boundaries or extreme points.
Already in $d=2$ dimensions, each obstacle may have many extreme points. Thus, we need to use more clever techniques that are strongly related to the minimum volume enclosing ellipsoid, known as L\"{o}wner's ellipsoid.

\begin{theorem}[L\"{o}wner Ellipsoid~\cite{ball1992ellipsoids}]
\label{thm:lowner}
Let $L$ be a convex body in $\REAL^d$, let $v \in L$, and let $E$ be ellipsoid of minimal $d$-dimensional volume containing $P$ that is centered at $v$. Let $\frac{1}{d} \term{E - c} + c$ denote the shrinkage of $E$ by a factor of $\frac{1}{d}$ around its center $c$.
Then
$
\frac{1}{d} \term{E - c} + c \subseteq L \subseteq E.
$
\end{theorem}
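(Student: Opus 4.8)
\noindent\emph{Proof sketch (the plan).}
This is the classical John/L\"owner ellipsoid theorem, and I would follow the standard argument; here $c$ (the point called $v$ in the statement) is the center of $E$, and $L$ plays the role of $P$. First I would normalize. An invertible affine map $T$ scales every $d$-dimensional volume by the fixed factor $\abs{\det T}$, carries ellipsoids to ellipsoids and convex bodies to convex bodies, and commutes both with the operation ``shrink an ellipsoid by $1/d$ about its center'' and with inclusions; applying the $T$ that sends $E$ to the Euclidean unit ball $B^d=\br{y\in\REAL^d:\mynorm{y}\le 1}$ (so $c\mapsto 0$), the claim reduces to: \emph{if $B^d$ is a minimum-volume ellipsoid containing the convex body $L$, then $\tfrac1d B^d\subseteq L\subseteq B^d$}. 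Existence of a minimum-volume enclosing ellipsoid is a routine compactness argument (since $L$ has nonempty interior the candidate volumes are bounded away from $0$, and the centers can be confined to a bounded set, so a convergent minimizing sequence exists); uniqueness is not needed for this statement.

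The inclusion $L\subseteq B^d=E$ is immediate from ``$E$ encloses $L$'', so only $\tfrac1d B^d\subseteq L$ requires work, and I would prove it by contradiction. Suppose some $z$ with $\mynorm{z}\le 1/d$ lies outside $L$. Since $L$ is closed and convex, a separating hyperplane yields a unit vector $u$ and a scalar $\beta$ with $\langle y,u\rangle\le\beta$ for all $y\in L$ and $\langle z,u\rangle>\beta$, whence $\beta<\langle z,u\rangle\le\mynorm{z}\le 1/d$. Combining with $L\subseteq B^d$, we get $L\subseteq H:=\br{y\in\REAL^d:\mynorm{y}\le 1,\ \langle y,u\rangle\le\beta}$, a ball with a spherical cap chopped off, with $\beta<1/d$. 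The plan is then to exhibit an ellipsoid $E'$ with $H\subseteq E'$ and $\vol(E')<\vol(B^d)$; this contradicts the minimality of $B^d$ and finishes the proof.

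For the construction I would rotate so that $u=e_1$ and search among spheroids of revolution about the $e_1$-axis, $E'=\br{y:(y_1-t)^2/a^2+(y_2^2+\cdots+y_d^2)/b^2\le 1}$ with $t\le 0$ and $0<a\le 1\le b$. Because the one-variable function $s\mapsto (s-t)^2/a^2+(1-s^2)/b^2$ is convex on $[-1,\beta]$ whenever $a\le b$, its maximum over the relevant range is attained at an endpoint, and this reduces the requirement $H\subseteq E'$ to the two inequalities $\abs{1+t}\le a$ and $(\beta-t)^2/a^2+(1-\beta^2)/b^2\le 1$. Taking $t=a-1$ and forcing the second inequality to equality expresses $b$ explicitly in terms of $a$ and $\beta$, and it remains to minimize $\vol(E')/\vol(B^d)=a\,b^{d-1}$ over the admissible interval of $a$: at $a=1$ one gets $b=1$ and ratio $1$, while a direct differentiation shows the ratio strictly decreases as $a$ drops below $1$ \emph{precisely} when $\beta<1/d$ (for instance when $d=2$ the optimum is at $a=(2+2\beta)/3<1$ iff $\beta<1/2$). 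I expect this final optimization---pinning down the right feasible ellipsoid and checking that the threshold is exactly $\beta=1/d$---to be the only substantive step; everything before it is normalization and separation bookkeeping. To close, I would note that the constant $1/d$ is best possible, witnessed by the regular simplex, and that for centrally symmetric $L$ one chops two opposite caps instead of one and the same scheme upgrades $1/d$ to $1/\sqrt d$.
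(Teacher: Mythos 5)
The paper does not prove this statement; it is cited verbatim as the classical John/L\"owner ellipsoid theorem from Ball's survey, so there is no in-paper argument to compare yours against. Your sketch is a correct rendering of the standard proof: affine-normalize $E$ to the unit ball, deduce $L\subseteq B^d$ for free, separate a hypothetical missing point of $\tfrac1d B^d$ to put $L$ inside a ball-minus-cap with $\beta<1/d$, and exhibit a strictly smaller spheroid covering that cap-truncated ball to contradict minimality. Your reduction of the containment $H\subseteq E'$ to the two endpoint inequalities via the convexity of $s\mapsto(s-t)^2/a^2+(1-s^2)/b^2$, the normalization $t=a-1$, and the $d=2$ sanity check $a^\ast=2(\beta+1)/3<1\iff\beta<1/2$ are all right, and the general-$d$ threshold $\beta=1/d$ is exactly the standard computation you flag as the remaining bookkeeping.

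One point you should make explicit, because the theorem's wording invites a wrong reading. The statement says ``$E$ be ellipsoid of minimal $d$-dimensional volume containing $P$ that is centered at $v$,'' which, read literally, fixes the center of the competing ellipsoids to the prescribed $v\in L$. Under that reading your contradiction does not go through: the improved ellipsoid $E'$ you build has center $(t,0,\dots,0)$ with $t=a-1<0$, i.e.\ it is \emph{not} centered at $v$, so it does not violate minimality within the $v$-centered class. Worse, the $v$-centered version of the claim is simply false with the factor $1/d$ --- take $L=[-1,1]^2$ and $v=(0,0.99)$: the minimum-volume $v$-centered ellipse has semi-axes $\sqrt2$ and $1.99\sqrt2$, and its $\tfrac12$-shrinkage about $v$ reaches $y\approx 2.4$, well outside $L$. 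What rescues the situation is that the paper's actual use of Theorem~\ref{thm:lowner} (inside Theorem~\ref{thm:boundsSimplices}) takes $E=\mvee(\conv{S})$, the \emph{unconstrained} MVEE, and lets $v$ be \emph{its} center; that is exactly the reading you adopt (``here $c$ \dots is the center of $E$''). So your interpretation and your proof are the intended ones, but it is worth stating that you are proving the unconstrained John theorem and noting that the ``centered at $v$'' clause in the statement is vacuous (the center of the optimal ellipsoid automatically lies in $L$) rather than a constraint on the minimization.
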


In the following subsections, we present our technique for computing an $\eps$-coreset with respect to the MVEE problem.

\begin{definition}[coreset for MVEE~\cite{kumar2005minimum}\label{coreelli}\label{MVEECoreset}]
For $\eps>0$ and a set $X\subseteq\REAL^d$, the set $S\subseteq X$ is an \emph{$\eps$-coreset for the MVEE} (minimum volume enclosing ellipsoid) of $X$, if the volume $\mathrm{vol}(\mvee(X))$ of the MVEE of $X$ is larger by a factor of at most $(1+\eps)$ from the volume $\vol(\mvee(S))$ of the MVEE of $S$, i.e.,
$\vol(\mvee(X))\leq (1+\eps)\vol(\mvee(S)).$
\end{definition}

Our MVEE coreset construction algorithm is based on three basic components: A) In Section~\ref{sec:findetremepoint} we suggest an algorithm for finding an extremal point on the obstacle in a specific direction. B) At section~\ref{sec:cudeaprox}, we crudely approximate the smallest enclosing ellipsoid of the obstacle, by utilizing the farthest ($2d$) points on the obstacle in a $d$ orthogonal directions, in order to construct a basis for the ellipsoid. C) Finally at Section~\ref{sec:coresetforelip}, we iteratively update the ellipsoid from the previous step, using a variant of Algorithm~3 of~\cite{kumar2005minimum} where points are accessed via an oracle.

\subsection{Finding extremal points of an implicit convex body}\label{sec:findetremepoint}

First, we give Algorithm~\ref{algtwo} that gets as input a membership oracle, an error parameter $\eps\in(0,1)$ a direction (unit vector) $u$, and a point $p$ inside an obstacle. It returns an $\eps$-approximation to the farthest obstacle point $q=p+au$ from $p$ along  $u$, i.e., $a$ is the supremum of the set $\br{a\geq 0\mid \oracle(p+au)=\true}$. This algorithm will serve as a key component in obtaining a crude approximation towards the convex hull of the infinite set of points.

\noindent\textbf{Overview of $\far(\oracle,\eps,u,p)$.} At Line~\ref{D1}, we define an arbitrary orthonormal base of $\REAL^d$, whose last vector is $e_d=u$. For simplicity, assume that $d=3$ and $e_1, e_2, e_3$ are the $x$, $y$ and $z$-axis of $\REAL^3$ respectively. Line~\ref{D2} defines a function that gets a point $(x,y)=(x_1,x_2)$ on the $xy$-plane and returns the height $f(x,y)$ of the highest obstacle point whose projection is $(x,y)$, i.e, the obstacle point $(x,y,z)$ with the maximum value of $z$. An $\eps$-approximation $\tilde{f}(x,y)$ for $f(x,y)$ with respect to the obstacle can be computed using one-dimensional binary/exponential search along the $z$-axis, as explained in Section~\ref{sec1}. The initial point is defined in Line~\ref{D4} as the $(x,y)$-coordinates of the input point $p=(x,y,z)$. At Lines~\ref{D6}--\ref{D7}, we compute the highest point whose projection is $(x_1,y)$ over every $x_1\in\REAL$, using $\tilde{f}$ above at the first iteration of the for loop. In the second (and in this example, last) iteration of the for loop, we compute the highest point $q$ whose projection is $(x_1,y_1)$ over every $y_1\in\REAL$. The height of this point is $z_1=\tilde{f}(x_1,y_1)$. We output this point $q=(x_1,y_1,z_1)$.


\begin{algorithm}
    \caption{$\far(\oracle,\eps,u, p)$\label{algtwo}}
{\begin{tabbing}
\textbf{Input:} An oracle $\oracle$ over $\REAL^d$, an error parameter \\
$\eps\in(0,1)$, a unit vector $u\in\REAL^d$, and an obstacle point\\
$p\in\REAL^d$, i.e, $\oracle(p)=\true$.\\
\textbf{Output:} An $\eps$-approximation to the farthest obstacle\\ point from $p$ along $u$.
\end{tabbing}}

\label{D1}Compute an orthonormal base $e_1,\cdots,e_d$ of $\REAL^d$, such that $e_d=u$.\\
Let $f:\REAL^{d-1}\to \REAL$ such that$
f(x_1,\cdots,x_{d-1}):=\max_{\oracle(\sum_{i=1}^d x_ie_d)=\true}x_d.$\label{D2}\\
Compute a function $\tilde{f}:\REAL^{d-1}\to \REAL$ that returns an $(\eps/d)$-approximation to $f(x)$ along $e_d$. 
\tcc{Using binary search, i.e., $O(\log(d/\eps))$ calls to $\oracle$ from the $d=1$ case with $\eps := \eps$.} \label{D3}
\While{not converged}
{
 Set $x_j:=\langle p, e_j \rangle$ for every $j \in [d-1]$ \label{D4} \\
 \For{$i:=1$ to $d-1$\label{D6}}{
 Compute an $(\eps/d)$-approximation $x_i$ along $e_i$  $\arg\max_{x_i\in\REAL} \tilde{f}(x_1,\cdots,x_{d-1})$ using oracle \label{D7}}
 $x_d:=\tilde{f}(x_1,\cdots,x_{d-1})$ \\
 }
 \Return $q := \sum_{i=1}^d x_i e_i$
\end{algorithm}





\begin{algorithm}
\caption{\approxcoreset($\oracle,\eps,p$)\label{algthree}}
{\begin{tabbing}
\textbf{Input:} \=An oracle over $\REAL^d$, an error parameter $\eps\in(0,1)$,\\
\> an obstacle point $p\in\REAL^d$, i.e, $\oracle(p)=\true$\=.\\
\textbf{Output:} \=A $O(2^{d})$-coreset $S$ for the obstacle.
\end{tabbing}}
$Q:=\br{(0,\cdots,0)}$; $S\gets \emptyset$; $i:= 0$ \\
\While{$\Ourspan(Q)\neq \REAL^d$\label{B2}}{
$i:=i+1$ \\
$x:=$ an arbitrary unit vector that is orthogonal to $\Ourspan(Q)$\\
$u:=\far(\oracle, \eps, x,p)$\label{B5}\\
$v:=\far(\oracle, \eps,-x,p)$\label{B6}\\
$S:= S\cup \br{u,v}$\label{B7}\\
$Q:= Q\cup \br{v-u}$\label{B8}\\
}
\Return $S$
\end{algorithm}
\begin{algorithm}[!htb]
    \caption{\mvecoreset($\oracle,\eps,p$)\label{algfour}}
{\begin{tabbing}
\textbf{Input:} \=An oracle over $\REAL^d$, an error parameter $\eps\in(0,1)$,  \\\>and an obstacle point $p$.\\
\textbf{Output:} \=An $\eps$-coreset $S$ for the obstacle.
\end{tabbing}}
$\hat{S} := $ \approxcoreset($\oracle,\eps, p$) \\
Let $L\in\REAL^{d\times d}$ and $c\in\REAL^d$ be the basis and center respectively of the MVEE of $\tilde{S}$\\
$S:=\emptyset$; $c_1:=c$; $L_1:=L$; $Q_i:= L_1^TL_1$ \\
\For{$i:=1$ to $\lceil d/\eps\rceil$ \label{algmain:for}}{
Compute an $(d /\eps)$-approximation to $p_{i+1}\in \argmax_{\br{q: \oracle(q)=\true}} \mynorm{L_i(q-c_i)}^2$ via Mixed Integer Convex Programming.\label{probOpt}\\
$\beta_{i+1}:= d\mynorm{L_i(p_{i+1}-c_i)}^2+1$ \\
$c_{i+1}:= (1-\beta_{i+1})c_i+\beta_{i+1} p_{i+1}$\\
$Q_{i+1} := (1-\beta_{i+1})Q_i + d\beta_{i+1} (p_{i+1}-c_i)(p_{i+1}-c_i)^T$\\
$L_{i+1} :=$ the Cholesky Decomposition of $Q_{i+1}^{-1}$\\
$S:=S\cup \br{p_{i+1}}$\label{E1}
}
\Return $S$\label{C12}
\end{algorithm}

\subsection{Crude approximated MVEE using membership oracle}\label{sec:cudeaprox}
We now provide an algorithm which when given a membership oracle $\oracle$ and an obstacle point $p$, returns an $O(2^d)$-coreset $S$ to the minimum volume enclosing ellipsoid (MVEE for short) of the obstacle that contains the point $p$, using a small number of calls to $\oracle$. The error parameter $\eps \in (0,1)$ defines the desired accuracy from the $\oracle$, during the calls to the algorithm $\far$.

\noindent\textbf{Overview of $\approxcoreset(\oracle,\eps,p)$}.
At Lines~\ref{B5}--\ref{B6}, we compute the ``leftmost'' and ``rightmost'' points $u$ and $v$ inside the obstacle along a unit vector $x \in \REAL^d$. More precisely, it computes an $\eps$-approximation to these points using the oracle and the procedure $\far$ that was previously described. We then add the points $u$ and $v$ to the coreset (Line~\ref{B7}). At Line~\ref{B2} we repeat the search on the orthogonal space of $Q$ which is iteratively updated at Line~\ref{B8}.


\subsection{$\eps$-coreset for the MVEE of an implicit convex body} \label{sec:coresetforelip}

In this subsection we describe our main technical result: an efficient construction of an $\eps$-coreset with respect to the MVEE problem, using only a membership oracle. The construction is off-line in the sense that the space (and oracle) are unchanged over time, and the computation is not parallel. Our algorithm can work in parallel using the merge-and-reduce technique~\cite{feldman2020core}. The algorithm gets as input a membership oracle, an error parameter $\eps$ and an obstacle point $p \in \oracle$ ($\oracle(p) = $\textit{true}), and returns an $\eps$-coreset to the MVEE of a given implicit obstacle via reduction to a mixed-integer convex programming.

\noindent\textbf{Overview of $\mvecoreset(\oracle,\eps,p)$.} The $i$th iteration of the for loop at Line~\ref{algmain:for}, uses an ellipsoid that approximates the obstacle, centered at $c_i$ and is defined by the affine transformation (matrix) $L_i$. This ellipsoid is the MVEE of the current coreset $S$. The approximation is improved in Line~\ref{E1} by adding a point $p_{i+1}$ to $S$. The point $p_{i+1}$ is computed in the $i$th iteration and is the farthest point in the obstacle from the current ellipsoid (defined by $L_i$ and $c_i$) vie the Mahalanobis distance. At Line~\ref{C12} we output a $\eps$-coreset for the MVEE of the obstacle.

\subsection{From ellipsoids to enclosing simplices}\label{covhullsec}
To simply the exclusion of obstacles from the state space, we further enclose our enclosing ellipsoid by a simplex; See Section~\ref{HowtoSample} for more details.

\begin{theorem}
\label{thm:boundsSimplices}
Let $P \subseteq \REAL^d$, be infinite set of points, and let $S \subseteq P$ such that 
$\mvee\term{\conv{S}} \subseteq \term{1+\eps} \mvee\term{\conv{P}}.$
Let $E$ be $\mvee\term{\conv{S}}$ that is centered at some $v \in \conv{P}$. Let $C$ be the set of $2d$ vertices of the expanded ellipsoid $\sqrt{d} \term{E - v} + v$. Then,
$\frac{1}{d^{1.5}} \term{\conv{C} - v} + v \subseteq \conv{P} \subseteq \conv{C}.$
\end{theorem}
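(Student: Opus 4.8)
The plan is to apply an affine change of coordinates that turns the ellipsoid $E$ into the Euclidean unit ball, reducing the theorem to a statement about how a scaled cross-polytope sits between a convex body and its L\"{o}wner ellipsoid; the two inclusions then follow from Theorem~\ref{thm:lowner} together with the elementary comparison $\mynorm{x}_2\le\mynorm{x}_1\le\sqrt d\,\mynorm{x}_2$ on $\REAL^d$.

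Concretely, first I would pick an invertible affine map $\Phi$ with $\Phi(v)=0$ that sends $E=\mvee\term{\conv{S}}$ (a full-dimensional ellipsoid, since the obstacle, and hence $\conv{S}$, contains a ball of radius $\eps$) to the unit ball $B=\br{x\in\REAL^d:\mynorm{x}\le 1}$ and the principal axes of $E$ to the coordinate axes. Every object in the statement is defined by affine-invariant operations — convex hull, minimum-volume enclosing ellipsoid, dilation about a point, the $2d$ axis-vertices of an ellipsoid, and set containment — so it suffices to prove the claim in the image coordinates, where $v=0$, $E=B$, $C=\br{\pm\sqrt d\,e_i : i\in[d]}$, $\conv{C}=\br{x:\mynorm{x}_1\le\sqrt d}$, and $\frac{1}{d^{1.5}}\term{\conv{C}-v}+v=\br{x:\mynorm{x}_1\le 1/d}$.

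For the inner inclusion, note that $v\in\conv{S}$ because the center of a minimum-volume enclosing ellipsoid of a convex body lies in that body; hence Theorem~\ref{thm:lowner} applied to $\conv{S}$ gives $\tfrac1d B\subseteq\conv{S}\subseteq\conv{P}$ in the new coordinates, and since $\mynorm{x}_2\le\mynorm{x}_1$ we get $\br{x:\mynorm{x}_1\le 1/d}\subseteq\tfrac1d B\subseteq\conv{P}$, i.e.\ $\frac{1}{d^{1.5}}\term{\conv{C}-v}+v\subseteq\conv{P}$. For the outer inclusion, the coreset property of $S$ ensures $\conv{P}$ is contained in $E$ up to the $\term{1+\eps}$-slack of the approximation, so in the new coordinates $\conv{P}\subseteq\term{1+\eps}B$; since $\mynorm{x}_1\le\sqrt d\,\mynorm{x}_2$, the ball $B$ — and, after absorbing the $\term{1+\eps}$ factor into the $\sqrt d$-expansion, also $\term{1+\eps}B$ — is contained in $\br{x:\mynorm{x}_1\le\sqrt d}=\conv{C}$, giving $\conv{P}\subseteq\conv{C}$. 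Undoing $\Phi$ yields the theorem.

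The routine parts are the two norm inequalities and the affine-invariance check. I expect the main obstacle to be the outer inclusion: one must handle the $\term{1+\eps}$ slack and the fact that $v$ is the center of $\mvee\term{\conv{S}}$ rather than of $\mvee\term{\conv{P}}$, so that ``$\conv{P}\subseteq$ dilation of $E$'' holds with a constant the $\sqrt d$-expansion can absorb. This is precisely where one invokes the quantitative guarantee of the MVEE-coreset construction of Section~\ref{sec:coresetforelip}, rather than only the volume inequality of Definition~\ref{MVEECoreset}.
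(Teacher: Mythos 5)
Your proposal matches the paper's proof in its essential structure: both apply L\"{o}wner's theorem (Theorem~\ref{thm:lowner}) to sandwich $\conv{P}$ between dilations of $E$, and both then sandwich $E$ between dilations of $\conv{C}$ via the ball/cross-polytope comparison, which in your normalized coordinates is exactly the $\ell_1$/$\ell_2$ inequality $\mynorm{x}_2\leq\mynorm{x}_1\leq\sqrt d\,\mynorm{x}_2$. Your explicit affine change of variables sending $E\mapsto B$, $v\mapsto 0$ is a more careful rendering of what the paper does in one line, and your observation that $v$ lies in $\conv{S}$ (not merely $\conv{P}$) is the right way to justify applying L\"{o}wner to $\conv{S}$ for the inner inclusion. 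The caveat you flag on the outer inclusion is well taken, though the ``absorbing the $(1+\eps)$ into the $\sqrt d$'' step does not actually go through: $(1+\eps)B\not\subseteq\br{x:\mynorm{x}_1\leq\sqrt d}$ for any $\eps>0$, so the chain of inclusions yields only $\conv{P}\subseteq(1+\eps)\conv{C}$. This is not a defect specific to your write-up — the paper's own combination of $\conv{P}\subseteq(1+\eps)\term{E-v}+v$ with $E\subseteq\conv{C}$ leaves the same residual factor — and a clean statement would either define $C$ via a $(1+\eps)\sqrt d$-expansion of $E$ or retain the $(1+\eps)$ on the right-hand side of the theorem.
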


\begin{proof}
By Theorem~\ref{thm:lowner}, it holds that 
$\frac{\term{1 + \eps}}{d} \term{E - v} + v \subseteq \conv{P} \subseteq \term{1 + \eps} \term{E - v} + v.$ By symmetry of $E$ around $v$, it holds by~\cite{ball1992ellipsoids} that
$\frac{1}{\sqrt{d}} \term{C - v} + v \subseteq E \subseteq C.$ Combining all of the inclusions above yields Theorem~\ref{thm:boundsSimplices}.
\end{proof}

\subsection{How to sample?}\label{HowtoSample}
In the following, we will discuss our approach to removing obstacles from the state space to ensure no redundancy in repeated sampling from obstacles. 

\noindent\textbf{Removing simplices from the state space.} Post to enclosing an obstacle with a simplex, we remove the simplex from the state space as follows. This objective is an easy task since we need to formulate the resulted state space. One way which we took to heart is to triangulate the resulted state space via constructing Delaunay triangulation. 

\noindent\textbf{Region sampling.} Since we have regions that were generated via the construction of Delaunay triangulation on the state space, then the probability of sampling from any region is equal to its volume divided by the sum of volumes over every region in the triangulated state space $\X$.

 \noindent\textbf{Sampling from inside a region.} Post to choosing some region (probabilistically), we apply the planner's own sampler on the region to obtain the next point for the planner.

 \noindent\textbf{Upon discovering new obstacles.} We will enclose the obstacle by a simplex as discussed in the previous section. Instead of directly applying triangulation, we find the intersection between the simplex and the current regions that represent the Delaunay triangulation of the current state space $\X$. For this, we use the Gilbert–Johnson–Keerthi distance algorithm~\cite{cameron1997enhancing} to find all intersecting regions with our new discovered simplex. We then remove our simplex from the intersecting regions followed by constructing the Delaunay triangulation on the result of such removal. This is faster than computing the Delaunay triangulation from scratch. 

\begin{definition}[Delaunay triangulation]
\label{def:delaunay}
For a set $Q \subseteq \REAL^d$, A triangulation $T(Q)$ is a partitioning of the interior of the $\conv{Q}$ into simplices, the vertices of which are points in $Q$. A Delaunay triangulation for a set $Q$ is a triangulation $T(Q)$ such that no point in $Q$ is inside the circum-hypersphere of any simplex in $T(Q)$.
\end{definition}

\section{Experimental Results}
\label{sec:results}
\subsection{Boosting the performance of RRT-based path planners}

\begin{table}[]
\begin{center}
\caption{Results for map~\ref{fig:map_c}}\label{table:mapc}
\adjustbox{max width=\linewidth}{
\begin{tabular}{|cc|cc|c|cc|}
\hline
\multicolumn{2}{|c|}{\multirow{2}{*}{\diagbox{Planner}{ Measure}}} & \multicolumn{2}{c|}{Time} & {\% of wasted} & \multicolumn{2}{c|}{Path length} \\ \cline{3-4}\cline{6-7}
& & mean & std & sampled points & mean & std\\
\hline\hline
\multirow{2}{*}{RRT} & \multicolumn{1}{|c|}{Vanilla} & $1.1$ & \oursqrt{0.28} & $81$ & $1983$ & $\oursqrt{1489}$ \\ \cline{2-7}
& \multicolumn{1}{|c|}{Our} & $\mathbf{0.36}$ & \oursqrt{0.008} & $\mathbf{0}$ &  $\mathbf{1977}$ & \oursqrt{1799}\\\hline 
\multirow{2}{*}{RRT$\ast$} & \multicolumn{1}{|c|}{Vanilla} & $0.657$  & \oursqrt{0.094}  & $68$ & $1981$ &  \oursqrt{1139} \\ \cline{2-7}
& \multicolumn{1}{|c|}{Our} & $\mathbf{0.3}$ & \oursqrt{0.02} & $\mathbf{0}$  & $\mathbf{1972}$  & \oursqrt{1890} \\\hline 
\multirow{2}{*}{RRT Dubins} & \multicolumn{1}{|c|}{Vanilla} & $93$  & \oursqrt{277}  & $77$ & $2272$ &  \oursqrt{115331} \\ \cline{2-7}
& \multicolumn{1}{|c|}{Our} & $\mathbf{48}$ & \oursqrt{74.6} & $\mathbf{0}$  & $\mathbf{2240}$  & \oursqrt{126387} \\\hline  
\end{tabular}
}
\end{center}
\end{table}

\begin{table}[]
\begin{center}
\caption{Results for map~\ref{fig:map_d}}\label{table:mapd}
\adjustbox{max width=\linewidth}{
\begin{tabular}{|cc|cc|c|cc|}
\hline
\multicolumn{2}{|c|}{\multirow{2}{*}{\diagbox{Planner}{ Measure}}} & \multicolumn{2}{c|}{Time} & {\% of wasted} & \multicolumn{2}{c|}{Path length} \\ \cline{3-4}\cline{6-7}
& & mean & std & sampled points & mean & std\\
\hline\hline
\multirow{2}{*}{RRT} & \multicolumn{1}{|c|}{Vanilla} & $81$ & \oursqrt{563.85} & $79$ & $7084$ & $\oursqrt{2087.3}$ \\ \cline{2-7}
& \multicolumn{1}{|c|}{Our} & $\mathbf{47}$ & \oursqrt{99.69} & $\mathbf{0}$ &  $\mathbf{6938}$ & \oursqrt{952.17}\\\hline 
\multirow{2}{*}{RRT$\ast$} & \multicolumn{1}{|c|}{Vanilla} & $117$  & \oursqrt{388}  & $67$ & $7066$ &  \oursqrt{606.61} \\ \cline{2-7}
& \multicolumn{1}{|c|}{Our} & $\mathbf{91}$ & \oursqrt{641} & $\mathbf{0}$  & $\mathbf{6925}$  & \oursqrt{1133.3} \\\hline 
\multirow{2}{*}{RRT Dubins} & \multicolumn{1}{|c|}{Vanilla} & $4503$  & \oursqrt{331983}  & $75$ & $6386$ &  \oursqrt{1484230} \\ \cline{2-7}
& \multicolumn{1}{|c|}{Our} & $\mathbf{4209}$ & \oursqrt{147822} & $\mathbf{0}$  & $\mathbf{4447}$  & \oursqrt{908380} \\\hline  
\end{tabular}
}
\end{center}
\end{table}

In this section, we tested the effectiveness of our approach by improving $3$ variants of the RRT algorithm on $4$ different maps. 
The idea was to apply a single prepossessing on each map, to obtain improvements for all of the RRT variants, either in terms of path length or in terms of running time.
\begin{figure*}[h]
    \centering
    \begin{subfigure}{.17\linewidth}
    \includegraphics[width=\linewidth,height=0.6\linewidth]{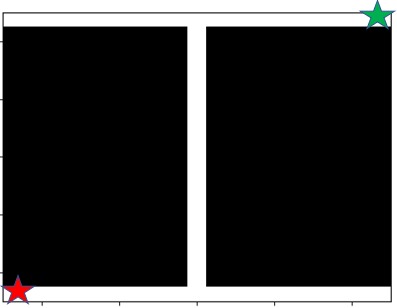}
    \caption{}  
    \label{fig:map_c}
    \end{subfigure}
    \begin{subfigure}{.17\linewidth}
    \includegraphics[width=\linewidth,height=0.6\linewidth]{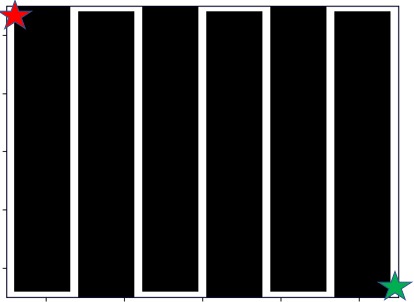}
    \caption{}  
    \label{fig:map_d}
    \end{subfigure}
    \begin{subfigure}{.17\linewidth}
    \includegraphics[width=\linewidth,height=0.6\linewidth]{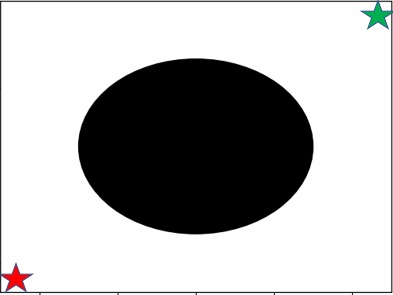}
    \caption{} 
    \label{fig:map_a}
    \end{subfigure}
    \begin{subfigure}{.17\linewidth}
    \includegraphics[width=\linewidth,height=0.6\linewidth]{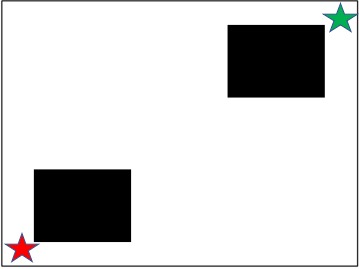}
    \caption{}
    \label{fig:map_b}
    \end{subfigure}
    \begin{subfigure}{.17\linewidth}
      \includegraphics[width=\linewidth,height=0.6\linewidth]{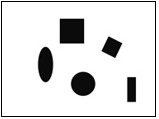}
      \caption{}
      \label{fig:univerise}
    \end{subfigure}
    \caption{Maps}
    \label{maps}
\end{figure*}
\subsubsection{Unlimited steps experiments.}  
We ran the RRT algorithms for a large number of sampling iterations on the maps~\ref{fig:map_c} and~\ref{fig:map_d}, once with the vanilla sampling technique, and once with our sampling after applying the preprocessing. We compared the following: 
\begin{enumerate*}[label=(\roman*)]
    \item the time needed for finding a solution, \label{att1}
    \item the ratio percentage of sampled points from obstacle from the total number of samples, and finally \label{att2}
    \item the length of the generated path. \label{att3}
\end{enumerate*}
Each test was conducted across $20$ different trails, the mean and variance of~\ref{att1}--\ref{att3} where reported at Table~\ref{table:mapc} and~\ref{table:mapd}. In the captions of each table, we refer to its corresponding map.  

 \noindent\textbf{Discussion. } Our proposed preprocessing technique has boosted the RRT algorithms while resulting in shorter paths from the start state to the goal state. There is a significant gap between our performance and the vanilla algorithms either in the time it took (e.g., on RRT and RRT* in both tables~\ref{table:mapc} and~\ref{table:mapd}) or in the path size (e.g., on RRT Dubins at Table~\ref{table:mapd}).
 
This is because our preprocessing ensured that the sampler will only sample non-obstacle points. Thus, the generated tree by RRT and its variants will be larger in size and will contain much more informative paths between any two states. To illustrate the advantage of our approach, Fig.~\ref{fig:trees} shows that our preprocessing done for the RRT algorithm leads to less sampling of points to attain a path from start to goal states than running plain sampling techniques.

\subsubsection{Performance under restricted number of steps} 

\begin{table}[htb!]
\begin{center}
\caption{Results for map~\ref{fig:map_a}}\label{table:map_a}
\begin{tabular}{|cc|c|cc|}
\hline
\multicolumn{2}{|c|}{\multirow{2}{*}{\diagbox{Planner}{ Measure}}} & {$\%$ of wasted} & \multicolumn{2}{c|}{Path length} \\ \cline{4-5}
& & sampled points & mean & std\\
\hline\hline
\multirow{2}{*}{RRT} & \multicolumn{1}{|c|}{Vanilla} & $26$ & $\infty$ & nan \\ \cline{2-5}
& \multicolumn{1}{|c|}{Our} & $\mathbf{0}$ &  $\mathbf{1807}$ & \oursqrt{3775}\\\hline 
\multirow{2}{*}{RRT$\ast$} & \multicolumn{1}{|c|}{Vanilla} &  $23$ & $\infty$ &  nan \\ \cline{2-5}
& \multicolumn{1}{|c|}{Our} & $\mathbf{0}$  & $\mathbf{1768}$  & \oursqrt{2423.59} \\\hline 
\multirow{2}{*}{RRT Dubins} & \multicolumn{1}{|c|}{Vanilla} & $26$ & $1771$ &  \oursqrt{13254} \\ \cline{2-5}
& \multicolumn{1}{|c|}{Our} & $\mathbf{0}$  & $\mathbf{1757}$  & \oursqrt{11054} \\\hline  
\end{tabular}
\end{center}
\end{table}

\begin{table}[htb!]
\begin{center}
\caption{Results for map~\ref{fig:map_b}}\label{table:map_b}
\begin{tabular}{|cc|c|cc|}
\hline
\multicolumn{2}{|c|}{\multirow{2}{*}{\diagbox{Planner}{ Measure}}} & {$\%$ of wasted} & \multicolumn{2}{c|}{Path length} \\ \cline{4-5}
& & sampled points & mean & std\\
\hline\hline
\multirow{2}{*}{RRT} & \multicolumn{1}{|c|}{Vanilla} & $25$ & $\infty$ & nan \\ \cline{2-5}
& \multicolumn{1}{|c|}{Our} & $\mathbf{0}$ &  $\mathbf{4416}$ & \oursqrt{57460}\\\hline 
\multirow{2}{*}{RRT$\ast$} & \multicolumn{1}{|c|}{Vanilla} & $26$ & $\infty$ &  nan \\ \cline{2-5}
& \multicolumn{1}{|c|}{Our} & $\mathbf{0}$  & $\mathbf{4404}$  & \oursqrt{63073} \\\hline 
\multirow{2}{*}{RRT Dubins} & \multicolumn{1}{|c|}{Vanilla} & $14$ & $4553$ &  \oursqrt{417577} \\ \cline{2-5}
& \multicolumn{1}{|c|}{Our} & $\mathbf{0}$  & $\mathbf{4100}$  & \oursqrt{18320} \\\hline  
\end{tabular}
\end{center}
\end{table}

\begin{figure}[htb!]
    \centering
    \includegraphics[width=\linewidth]{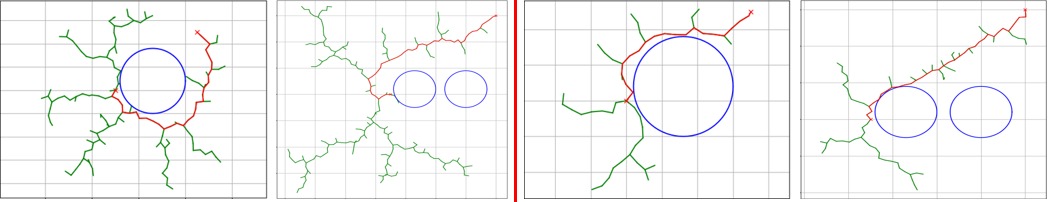}%
    \caption{Left: the generated trees using vanilla RRT. Right: The tree generated using RRTs with our preprocessing.}
    \label{fig:trees}
\end{figure}


In this experiment, we highlight the goal and motivation from which our methods have emerged. We operate under the assumption that the number of sampling iterations is restricted and small. Mostly, when using path planners, one can not know beforehand the number of iterations needed for generating successful paths from the initial state to the goal state. Most of such problems are handled via repetition where the number of iterations is either increased to ensure successful path generation or decreased for faster results. 
In such a context, we have observed that random sampling-based approaches don't take into account repeated samples inside an obstacle. Such observation leads to wastage in budget based sample path planners where the number of samples is crucial for the path planner. In addition, some path planners will take into account the entire budget of samples for generating all possible paths from state to goal states. Again, even in such path planners, wasting away samples will only lead to worse results compared to the case where wastage is prevented; see RRT Dubins at Table~\ref{table:map_a}. To solve the path planning problem, one needs to come up with either an informative sampling technique or come up with a new path planner. Throughout the paper, we have chosen to be a plug-in component for path planners rather than providing new path planners.

\noindent\textbf{Discussion. } When presented with a limited number of sampling iterations, our proposed preprocessing technique ensured the existence of successful path generation over $20$ trials on multiple RRT-based planners, opposed to sampling ``blindly''; see Table~\ref{table:map_a} where $\infty$ represents the inability to generate a path.

\subsubsection{Method illustration} 
We refer the reader to Fig~\ref{fig:rrtfly} visualizing a classic run of RRT using our methods. Here, obstacles are bounded on the fly during the RRT run once we sample from them.

\begin{figure}[h]
    \centering
    \includegraphics[height=.2\linewidth, width=.8\linewidth]{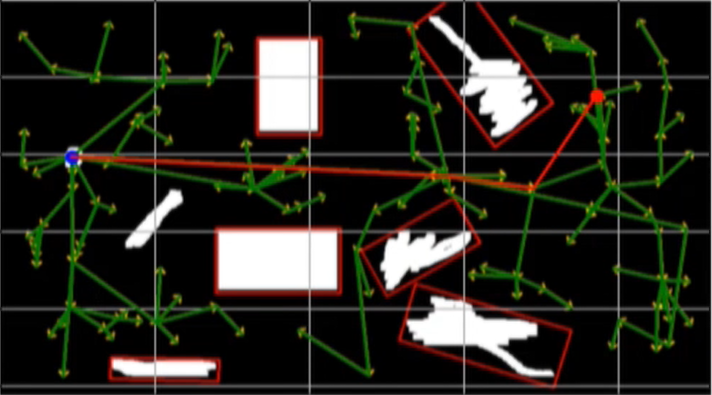}
    \caption{Running our methods on the fly with RRT.}
    \label{fig:rrtfly}
\end{figure}

\subsection{Bounding Convex Shapes}
To confirm our theoretical guarantees, we present the performance of our method for bounding convex shapes using an approximation towards the minimum volume enclosing ellipsoid. Our error $Err$ in which is stated in Algorithm~\ref{algfour} is shown as a function of the number of iterations. We note that the results shown in Fig.~\ref{fig:synthetic} have guaranteed almost an approximation of $1+\frac{1}{1001}$ to the \emph{MVEE}.

\begin{figure}[htb!]
    \centering
    \includegraphics[width=.9\linewidth,height=.8\linewidth]{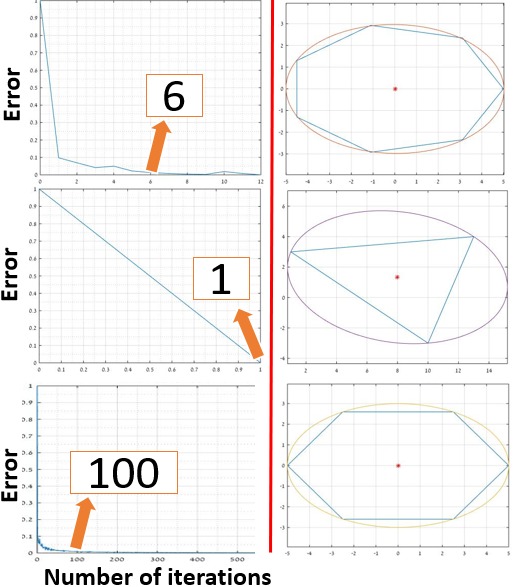}
    \caption{Our method's performance on convex polygons.}
    \label{fig:synthetic}
\end{figure}

\subsection{Map approximation}
We conclude our experiments by showing how our method can be used to generate an approximated map, i.e., $(1+\epsilon)$-approximation to the real map. The map is represented as a 2D binary map such that white pixels represent free space, and black pixels represent obstacle points; see Fig.~\ref{fig:univerise}. 
To highlight our performance for the task of map approximation, we have used a few known algorithms: \begin{enumerate*}[label=(\roman*)]
    \item The \emph{A*} algorithm, and
    \item the \emph{RRT} algorithm.
 \end{enumerate*}
To ensure the highest coverage of the map, we ran the $A^\ast$ algorithm where start and goal states to be the leftmost lower and rightmost upper corners of the map, respectively. As for the \emph{RRT} algorithm is run where the start state is at the center of the map; see Table~\ref{fig:map_approx}.

\begin{table}[htb!]
	\centering
	\caption{Comparison between $A^*$, RRT and our algorithm regarding a state space exploration problem. Here white/gray pixels denoted sampled points while black pixels denote unsampled points.}
\adjustbox{max width=\linewidth}{
  \begin{tabular}{|l||c|c|c|}
  \hline
    Algorithm &  
    ~5,000 queries & 
    ~10,000 queries & 
    sufficient queries
    \\ \hline

   A* (43K) &
     \includegraphics[trim=0 0 0 -20, scale = 0.1]{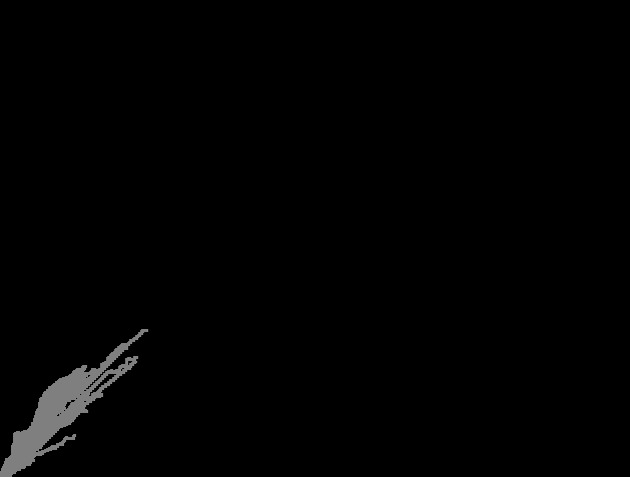} &
    \includegraphics[trim=0 0 0 -20, scale = 0.1]{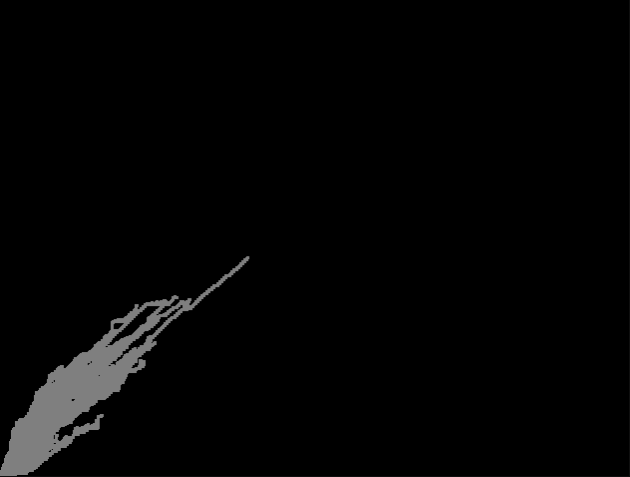} &
    \includegraphics[trim=0 0 0 -20, scale = 0.1]{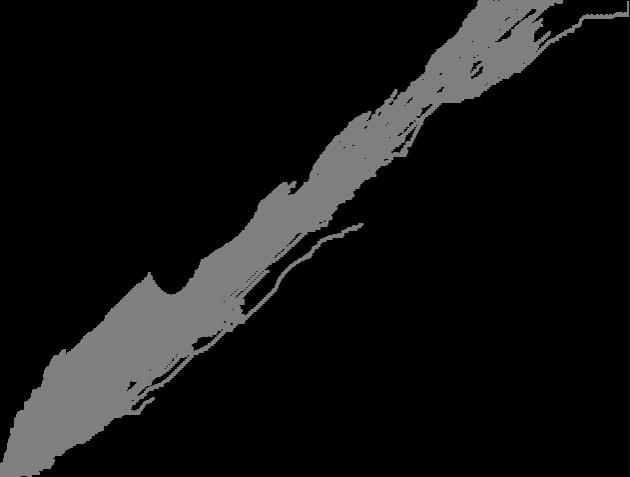} 
    \\ \hline
    
    RRT (15K)&
     \includegraphics[trim=0 0 0 -20, scale = 0.1]{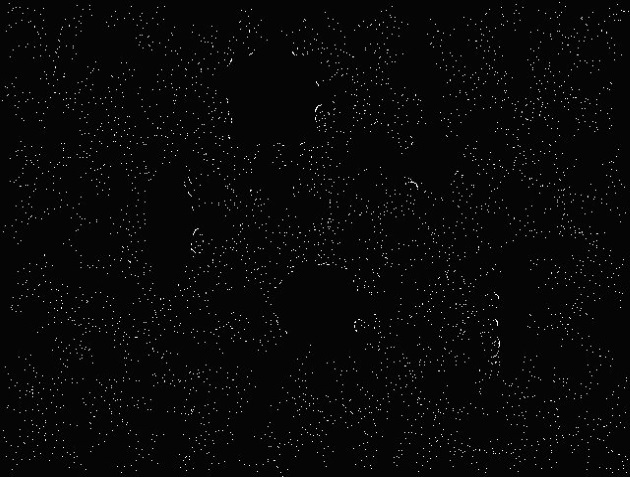} & \includegraphics[trim=0 0 0 -20, scale = 0.1]{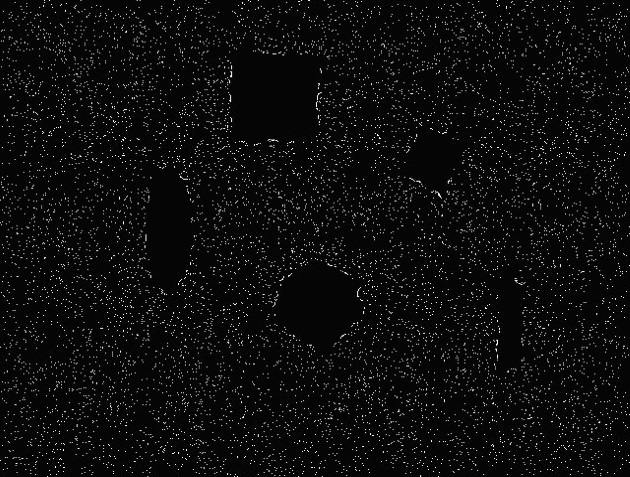} &
     \includegraphics[trim=0 0 0 -20, scale = 0.1]{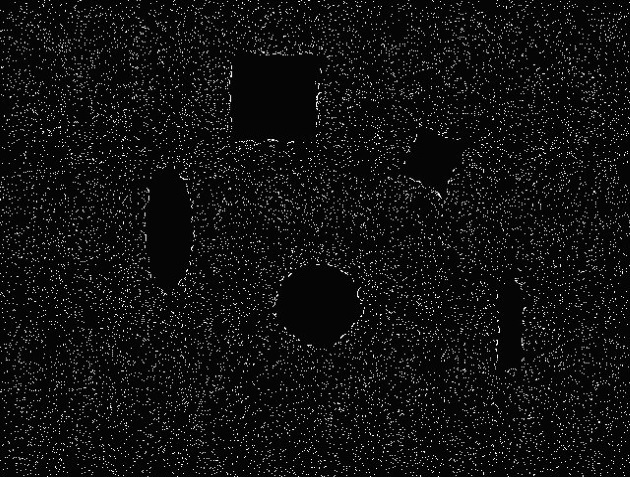}
    
    \\ \hline
    
    Our Alg. (14K)&
    \includegraphics[trim=0 0 0 -20, scale = 0.1]{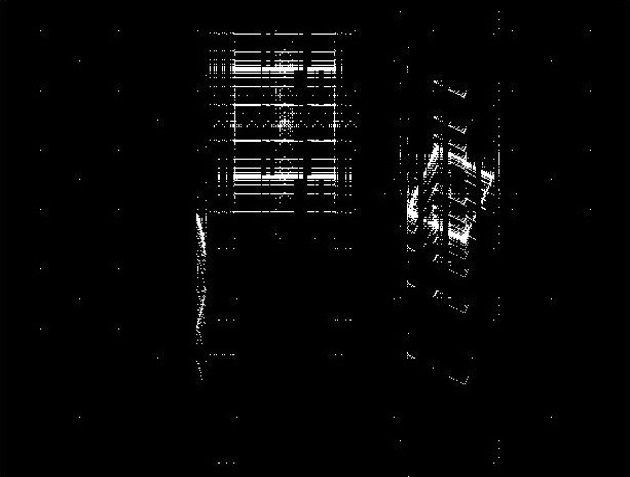} & \includegraphics[trim=0 0 0 -20, scale = 0.1]{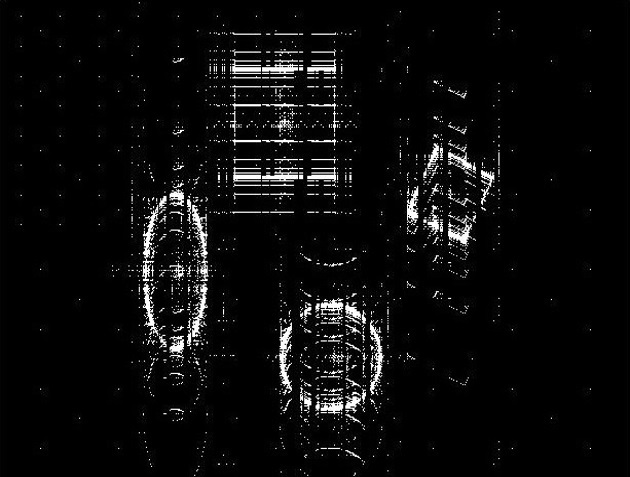} & \includegraphics[trim=0 0 0 -20, scale = 0.1]{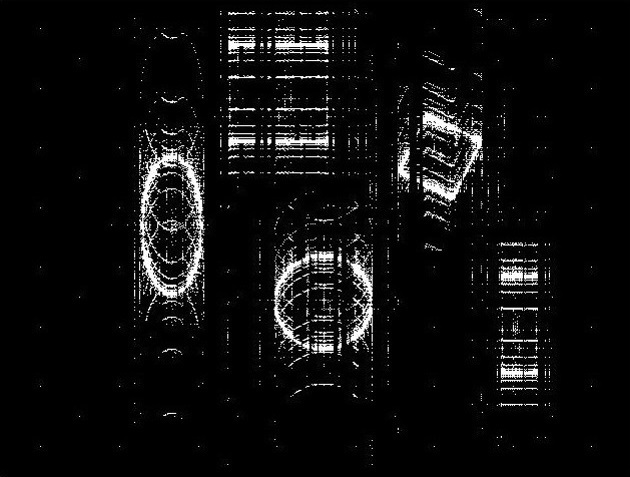}
    
    \\ \hline
    
    \hline
  \end{tabular}
  }
  \label{fig:map_approx}
\end{table}

\emph{$A*$} was not good enough to represent the real map as expected. On the other hand, RRT seems to be better at exploring the space of the map.
However, even after $15,000$ queries where white spots represent obstacle points, there are still many blind spots that we cannot determine whether they are obstacle points or not. Finally, we note that in less than $15,000$, our algorithm presented an almost perfect mapping of the world. 

\section{Proof Of Correctness}
In this section, we prove our results via the following.

\begin{definition}\cite{grotschel1988geometric} \label{def:cencircS}
Let $X \subseteq \REAL^d$ be a convex set, and $\epsilon > 0$ be a real number. Let $
S(K,\eps) = \br{x \in \REAL^d \mid \mynorm{x - y}_2 \leq \epsilon \text{ for some } y \in X},$ defines a ball of radius $\eps$ around $X$.
For a pair of positive real numbers $R > r > 0$, a positive integer $d \geq 2$, a convex body $X \subset \REAL^d$, and a point $a_0 \in X$, we denote by $(X;d,R)$ a circumscribed convex body such that $X$ is contained inside a ball of radius $R$ centered at the origin, and by $(X;d,R,r,a_0)$ a body that also contains a ball of radius $r$ centered at $a_0$; such body is referred to by the notion of \emph{centered body}.
\end{definition}

\begin{definition}\cite[Definition~2.1.14, Weak Membership Problem]{grotschel1988geometric} \label{def:memProb}
Let $X \subseteq \REAL^d$ be a convex set, then given a vector $y \in \REAL^d$ and a rational number $\delta > 0$, either,
\begin{enumerate*}
\item assert that $y \in S(X,\delta)$, or
\item assert that $y \not \in S(X,-\delta)$.
\end{enumerate*}
\end{definition}

\begin{definition}\cite[Definition~2.1.10, Weak Optimization Problem]{grotschel1988geometric}\label{def:optProb}
Let $X \subseteq \REAL^d$ be a convex set, then given a vector $c \in \REAL^d$ and a rational number $\varepsilon$, either
\begin{enumerate*}
\item finds a vector $y \in \REAL^d$ such that $y \in S(X,\varepsilon)$ and $c^Tx \leq c^Ty + \varepsilon$ for every $x \in S(K,-\varepsilon)$.
\item asserts that $S(X,-\varepsilon)$ is empty.
\end{enumerate*}
\end{definition}

\begin{definition}\cite[Definition~2.1.11, Weak Violation Problem]{grotschel1988geometric}\label{def:vioProb}
Let $X \subseteq \REAL^d$ be a convex set, then given a vector $c \in \REAL^d$, a rational number $\gamma$, and a rational number $\varepsilon > 0$, either
\begin{enumerate*}
\item assert that $c^Tx \leq \gamma + \varepsilon$ for all $x \in S(X,-\varepsilon)$ (i.e., $c^Tx \leq \gamma$ is almost valid), or
\item find a vector $y \in S(X,\varepsilon)$ with $c^Ty \geq \gamma - \varepsilon$ (a vector almost violating $c^tx \leq \gamma$).
\end{enumerate*}
\end{definition}

The following lemmas, help us in establishing our results:
\begin{lemma}\cite[Theorem~4.3.2]{grotschel1988geometric}\label{lem:memToVio}
There exists an polynomial time algorithm that solves the weak violation problem for every centered convex body $(X;d,R,r,a_0)$ given by a weak membership oracle, in $\tau = \left( \frac{dRr}{\varepsilon}\right)^{O(1)}$ oracle calls.
\end{lemma}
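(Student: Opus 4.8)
The plan is to reduce the weak violation problem to the central-cut ellipsoid method, exploiting the centeredness data $R,r,a_0$ both to seed the method and to simulate a separation oracle using only the given weak membership oracle. Fix an instance $(c,\gamma,\varepsilon)$ of the weak violation problem for the centered body $(X;d,R,r,a_0)$, and put $P:=X\cap\{x\in\REAL^d\mid c^Tx\ge\gamma\}$. Since $B(a_0,r)\subseteq X\subseteq B(0,R)$ we have $P\subseteq B(0,R)$, so the ellipsoid method can be started from $E_0:=B(0,R)$, which contains $P$.

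I would then run the following loop, maintaining the invariant that the current ellipsoid $E_k$ contains $S(P,-\varepsilon')$ for an appropriate $\varepsilon'=\Theta(\varepsilon)$. Let $a_k$ be the center of $E_k$. If $c^Ta_k<\gamma$, every point of $P$ lies in the half-space $\{x\mid c^T(x-a_k)>0\}$, so we apply the central cut with normal $c$ and set $E_{k+1}$ to the minimum-volume ellipsoid containing $E_k\cap\{x\mid c^T(x-a_k)\ge0\}$. If $c^Ta_k\ge\gamma$, we query the membership oracle at $a_k$ with a small precision $\delta$. If the oracle asserts $a_k\in S(X,\delta)$, then (for $\delta\le\varepsilon$) we have $a_k\in S(X,\varepsilon)$ and $c^Ta_k\ge\gamma-\varepsilon$, so $a_k$ is a legitimate answer of the second alternative and we stop. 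If it asserts $a_k\notin S(X,-\delta)$, we instead produce a unit vector $w$ with $w^Tx\le w^Ta_k+\delta'$ for all $x\in X$ (an approximately valid cut), and set $E_{k+1}$ to the minimum-volume ellipsoid containing $E_k\cap\{x\mid w^T(x-a_k)\le0\}$; choosing $\delta'\le\varepsilon'$ guarantees this cut does not remove any point of $S(P,-\varepsilon')$, preserving the invariant. If no point has been output after $N=O(d^2\log(R/\varepsilon))$ iterations, then $\vol(E_N)$ has dropped below the volume of a Euclidean ball of radius $\varepsilon'$, so by the invariant $S(P,-\varepsilon')=\emptyset$; a short computation shows this forces $c^Tx\le\gamma+\varepsilon$ for every $x\in S(X,-\varepsilon)$, i.e., the first alternative.

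The only step that is not routine, and the one where the centeredness hypothesis is indispensable, is the construction of the approximately valid $w$ from a membership oracle when $a_k\notin S(X,-\delta)$; without a positive lower bound $r$ on the inner radius, a body accessed only by membership queries could be degenerate and no separating information could be extracted. Since $a_0$ is deep inside $X$ (indeed $B(a_0,r)\subseteq X$) while $a_k$ is approximately outside, a binary search along the segment $[a_0,a_k]$, using $O(\log(R/\delta))$ membership queries, locates a point $z$ that is in $S(X,\delta)$ but whose perturbations toward $a_k$ leave $S(X,-\delta)$. If $z$ lies within $\delta$ of $a_k$, then $a_k\in S(X,O(\delta))$ and we output $a_k$ after all; otherwise $z$ is an approximate boundary point, and because $X$ sandwiches the ball $B(a_0,r)$ between $a_0$ and $z$, there is an approximately supporting half-space of $X$ at $z$ whose outer normal makes a bounded angle with $a_k-a_0$, and it can be pinned down by polynomially many further membership probes near $z$ and then translated to pass through $a_k$. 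Getting the quantitative bounds of this extraction right, and thereby controlling how small $\delta,\delta'$ must be, is the main obstacle; everything else is bookkeeping.

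Finally I would count the oracle calls: the loop runs for $N=O(d^2\log(R/\varepsilon))$ iterations, each issuing one query at $a_k$, the $O(\log(R/\delta))$ queries of the binary search, and polynomially many probes to build $w$, where the precisions are set so that $\log(1/\delta)$ and $\log(1/\delta')$ are $O(\log(dR/(r\varepsilon)))$; the arithmetic per iteration, namely updating and rounding the ellipsoids, is polynomial in the same parameters. Hence the algorithm runs in polynomial time and the total number $\tau$ of oracle calls is polynomially bounded in $d,\log R,\log(1/r),\log(1/\varepsilon)$, which in particular yields the stated $\tau=(dRr/\varepsilon)^{O(1)}$. The approximate ellipsoid method and the membership-to-separation reduction for well-bounded bodies are treated in detail in~\cite{grotschel1988geometric}; Theorem~\ref{thm:lowner} is not needed here.
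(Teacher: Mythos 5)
In the paper this lemma is a citation to Gr\"otschel--Lov\'asz--Schrijver, Theorem~4.3.2, with no proof given, so you are reconstructing the GLS argument from scratch. Your scaffolding---seed the ellipsoid method with $B(0,R)$, binary-search from the known inner point $a_0$ toward the current center $a_k$ to locate an approximate boundary point $z$, and stop either when $a_k$ itself qualifies as an answer or when a volume argument certifies that the residual set is empty---is the correct textbook outline for going from weak membership to weak violation for a centered body.

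The gap is precisely the step you flag as ``the main obstacle,'' and the patch you sketch would not close it. You propose to recover an approximately separating unit vector $w$ at $z$ by local membership probes near $z$. But a $\delta$-weak membership oracle is intrinsically ambiguous in the $2\delta$-thick shell around the boundary: probes at distance on the order of $\delta$ from $z$ carry no information, while probes at a larger scale $\eta$ determine a supporting direction only up to an error that depends on $\eta$ and on the local curvature of $\partial X$, which is not bounded in terms of $(d,R,r,\varepsilon)$ for a general centered body. Hence the cut tolerance $\delta'$ cannot be controlled and the invariant on $S(P,-\varepsilon')$ can break. The actual GLS/Yudin--Nemirovskii resolution does not try to recover a true supporting normal at all: it takes the cut direction globally (essentially the unit vector along $a_k-a_0$, for which $B(a_0,r)\subseteq X$ yields a quantitative, curvature-free depth guarantee at $z$) and then runs a \emph{shallow-cut} ellipsoid method that tolerates cuts slicing into $X$ so long as they never cut into the inner ball. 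That shallow-cut machinery is the real content of Theorem~4.3.2, and without it your membership-to-separation reduction is unestablished. A separate, minor point: the oracle count you derive is polynomial in $d,\log R,\log(1/r),\log(1/\varepsilon)$, the correct GLS encoding-length form; the paper's written bound $\left(\frac{dRr}{\varepsilon}\right)^{O(1)}$ is a loose paraphrase whose $r$-dependence points the wrong way, and your derivation gives something tighter rather than literally that expression.
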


\begin{lemma}\cite[Remark~4.2.5]{grotschel1988geometric} \label{lem:vioToOpt}
There exists an oracle-polynomial time algorithm that solves the weak optimization problem for every circumscribed convex body $(X;d;R)$, given by a weak violation oracle $\tau = \left( \frac{dR}{\varepsilon}\right)^{O(1)}$.
\end{lemma}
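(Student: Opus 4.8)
The plan is to reconstruct the standard reduction from the weak violation problem (Definition~\ref{def:vioProb}) to the weak optimization problem (Definition~\ref{def:optProb}). Fix the objective $c\in\REAL^d$ and the target precision $\varepsilon$ from the weak optimization instance; rescaling, we may assume $\mynorm{c}_2=1$, so that the true optimum $\gamma^{\ast}=\max_{x\in X}c^Tx$ lies in $[-R,R]$ since $X$ sits inside the ball of radius $R$ about the origin. The core of the algorithm is a binary search for $\gamma^{\ast}$: keep an interval $[\ell,u]$ that (up to fuzz) brackets $\gamma^{\ast}$, starting from $[-R,R]$, together with the best feasible witness $y$ produced so far. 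At each step call the given weak violation oracle with objective $c$, threshold $\gamma=(\ell+u)/2$, and an internal precision $\varepsilon'$ fixed below: if the oracle asserts $c^Tx\le\gamma+\varepsilon'$ for all $x\in S(X,-\varepsilon')$ set $u:=\gamma+\varepsilon'$; otherwise it returns $y'\in S(X,\varepsilon')$ with $c^Ty'\ge\gamma-\varepsilon'$, so set $\ell:=\gamma-\varepsilon'$ and record $y:=y'$. After $O(\log(R/\varepsilon'))$ steps the interval has length at most $\varepsilon$; output the recorded witness, or, if none was ever produced, report that $S(X,-\varepsilon)$ is empty.

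For the complexity, the number of binary-search steps is $O(\log(R/\varepsilon'))$, and each step costs one weak-violation query, whose cost on a centered body is $(dR/\varepsilon')^{O(1)}$ — exactly the regime of Lemma~\ref{lem:memToVio} when the violation oracle is itself realized from a weak membership oracle. Choosing $\varepsilon'$ polynomially small in $\varepsilon/(dR)$ keeps the product at $(dR/\varepsilon)^{O(1)}$, and since every step (a midpoint computation plus bookkeeping) runs in polynomial time, the whole procedure is oracle-polynomial time, as claimed.

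The step I expect to be the main obstacle is controlling the ``weak'' (fuzzy) nature of every guarantee: the violation oracle may answer arbitrarily inside a shell of width $\varepsilon'$ about the boundary of $X$, so one must verify that these errors do not compound over the $O(\log(R/\varepsilon'))$ iterations. The clean route is to maintain the invariant that $\ell-\varepsilon'\le\gamma^{\ast}\le u+\varepsilon'$ and that the recorded witness always lies in $S(X,\varepsilon')$ with $c^Ty\ge\ell-\varepsilon'$; picking $\varepsilon'$ so that the accumulated slack $O(\varepsilon'\log(R/\varepsilon'))$ stays below $\varepsilon$ then upgrades this invariant to the $\varepsilon$-approximate optimality and $\varepsilon$-approximate feasibility demanded by Definition~\ref{def:optProb}. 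A secondary point is certifying the empty case: if the interval collapses without the violation oracle ever exhibiting a point above the running upper bound, then no point survives the $\varepsilon$-shrinkage of $X$, which is precisely the second alternative of the weak optimization problem. Composed with Lemma~\ref{lem:memToVio}, this yields a weak-optimization routine driven solely by a membership oracle, which is the form in which the subsequent correctness argument will use it.
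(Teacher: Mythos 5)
The paper does not prove this lemma: it is cited verbatim from Gr\"otschel, Lov\'asz and Schrijver \cite[Remark~4.2.5]{grotschel1988geometric}, so there is no in-paper argument to compare against. Your binary-search reduction from weak violation to weak optimization is exactly the textbook argument behind that remark, and the core loop is correct: bisect on the threshold $\gamma$, use the violation oracle either to tighten the upper bound or to push the lower bound up while recording a witness, and output the best witness once the bracket is narrow enough.

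Two remarks on the details. First, the errors do not actually compound the way you fear: each step \emph{replaces} the bracket endpoints rather than accumulating slack, so with the interval halving and a fixed per-step fuzz of $\varepsilon'$ the steady-state width is $O(\varepsilon')$, not $O(\varepsilon'\log(R/\varepsilon'))$; your conservative choice of $\varepsilon'$ still works, but the invariant is simpler than stated. Second, the empty-case certification is the one genuinely delicate point and your treatment of it is a bit loose. An interval collapsing with no witness does not by itself certify that $S(X,-\varepsilon)$ is empty, because alternative~(1) of the violation oracle can be answered vacuously for a moderate $\gamma$. The correct move is to explicitly probe a threshold below the trivial floor: since $X\subseteq B(0,R)$, every $x\in X$ has $c^Tx\ge -R\mynorm{c}_2$, so a ``yes'' answer from the violation oracle at some $\gamma < -R\mynorm{c}_2-\varepsilon'$ can only be correct if $S(X,-\varepsilon')$ is empty, and hence (taking $\varepsilon'\le\varepsilon$) that $S(X,-\varepsilon)$ is empty. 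Making that floor query explicit closes the one gap; with it your reconstruction matches the cited result, and composing with Lemma~\ref{lem:memToVio} gives the membership-oracle-driven optimizer used later in Theorem~\ref{thm:farProof}.
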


Given a vector $u \in \REAL^d$ and $\varepsilon \in \REAL_+$, the following theorem shows that $\far$ yields a point which is far at most $\varepsilon$ from the farthest point in the convex set $X$ along $u$:

\begin{theorem}\label{thm:farProof}
Let $\varepsilon \in \REAL_+$ a real number and let $\oracle$ be a $\varepsilon$-weak membership oracle for a centered convex body $(X; d,R,r,a_0)$; see Definition~\ref{def:memProb} and Definition~\ref{def:cencircS}.
Let $u \in \REAL^d$ be a unit vector, $p \in \REAL^d$ an obstacle point, and let $\hat{x} \in X$ be the output of a call to $\far\term{\oracle, \eps, u, p}$.
Then the following hold:
\begin{enumerate*}[label={(\arabic*)}]
\item $\mynorm{\hat{x} - \arg\max_{x \in X} u^Tx}_2 \leq \epsilon$. \label{en:farGuar1}
\item The number of calls to the oracle is $M=\left(\frac{drR}{\varepsilon}\right)^{O(1)}$. \label{en:farGuar2}
\end{enumerate*}
\end{theorem}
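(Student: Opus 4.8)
The plan is to reduce the guarantees of $\far$ to the classical oracle-complexity machinery of Gr\"otschel--Lov\'asz--Schrijver that is already imported in Lemmas~\ref{lem:memToVio} and~\ref{lem:vioToOpt}. The key observation is that $\far(\oracle,\eps,u,p)$ is, up to the change of orthonormal basis performed in Line~\ref{D1}, exactly a routine that repeatedly solves a one-dimensional weak optimization problem (maximize a coordinate over the implicitly-described convex body $X$) along each of the $d$ basis directions, using the one-dimensional binary/exponential search of Section~\ref{sec1} as the primitive. So the first step is to make this identification precise: fix the orthonormal basis $e_1,\dots,e_d$ with $e_d=u$, and note that the function $\tilde f$ computed in Line~\ref{D3} is precisely an $(\eps/d)$-accurate weak optimization oracle for the linear objective $e_d^T x$ over $X$, obtained by a single exponential-then-binary search whose correctness and $O(\log(dRr/\eps))$ query bound were established (in the $d=1$ form) in Section~\ref{sec1} using the radius-$\eps$-ball and the obstacle-separation assumptions.

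Next I would chain the reductions. By Lemma~\ref{lem:memToVio}, the $\eps$-weak membership oracle for the centered body $(X;d,R,r,a_0)$ gives a weak violation oracle in $\bigl(\tfrac{dRr}{\eps}\bigr)^{O(1)}$ calls; by Lemma~\ref{lem:vioToOpt}, this yields a weak optimization oracle in a further $\bigl(\tfrac{dR}{\eps}\bigr)^{O(1)}$ calls, so overall a weak optimization oracle at cost $M_0=\bigl(\tfrac{dRr}{\eps}\bigr)^{O(1)}$ per linear objective. Applying this with the objective $c=u$ and accuracy parameter $\eps'=\eps$ produces a point $\hat x\in S(X,\eps)$ with $u^T x\le u^T\hat x+\eps$ for all $x\in S(X,-\eps)$; shrinking the accuracy by the fixed factor $d$ and using that $X$ contains a ball of radius $r$ (so the ``thin'' direction does not hurt) lets us replace $S(X,-\eps)$ by $X$ itself at the cost of only a constant-factor change inside the $(\cdot)^{O(1)}$, and gives the approximate maximizer claim. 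Collecting the basis-change (free, $O(d^2)$ arithmetic, no oracle calls) plus the $d$ coordinate-wise weak-optimization solves, each costing $M_0$, yields part~\ref{en:farGuar2} with $M=d\cdot M_0=\bigl(\tfrac{drR}{\eps}\bigr)^{O(1)}$, and part~\ref{en:farGuar1} follows since an $(\eps/d)$-error in each of the $d$ separately-optimized coordinates aggregates to Euclidean error at most $\sqrt{d}\cdot(\eps/d)\le\eps$ (or one simply runs each search to accuracy $\eps/\sqrt d$, still within $(\cdot)^{O(1)}$).

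The main obstacle I anticipate is the gap between what $\far$ literally does and what the GLS reduction delivers. $\far$ is written as a coordinate-ascent ``while not converged'' loop that maximizes $\tilde f(x_1,\dots,x_{d-1})$ one coordinate at a time and then reads off $x_d$; this is not a priori the same as a single weak-optimization call for the linear functional $u^T x$, because coordinate ascent on a convex body need not reach the global maximizer of a non-separable objective in one pass. The honest fix is to argue that the relevant objective here \emph{is} effectively separable in the chosen basis — we are maximizing the single coordinate $x_d$, and the inner loop over $i=1,\dots,d-1$ is merely locating \emph{some} feasible $(x_1,\dots,x_{d-1})$ at which the fiber is as tall as possible, which for a convex body is controlled by the projection onto $e_d$ — or, more robustly, to reinterpret the loop as implementing the GLS weak-optimization oracle directly rather than taking its pseudocode literally, and to absorb any extra passes into the $(\cdot)^{O(1)}$ exponent. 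I would state a short lemma to the effect that $\tilde f$'s maximum over $\REAL^{d-1}$ equals $\max_{x\in X} e_d^T x$ up to $O(\eps)$, prove it from convexity of $X$ together with the Section~\ref{sec1} guarantee, and then the rest is bookkeeping of error terms and the composition of the two polynomial oracle-reductions.
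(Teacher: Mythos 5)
Your proposal follows the same route as the paper's own proof: treat $\far$ as solving a weak optimization problem, chain Lemma~\ref{lem:memToVio} (weak membership $\to$ weak violation) with Lemma~\ref{lem:vioToOpt} (weak violation $\to$ weak optimization), and instantiate the resulting oracle with objective $c=u$ and accuracy $\eps$ to read off both the approximation guarantee~\ref{en:farGuar1} and the polynomial query bound~\ref{en:farGuar2}. You additionally flag the genuine mismatch between the coordinate-ascent pseudocode of Algorithm~\ref{algtwo} and a literal GLS weak-optimization call, and sketch the per-coordinate error bookkeeping --- neither of which the paper's terse proof addresses --- but since the paper likewise resolves the theorem purely by citing the GLS reduction chain, your core argument is essentially the same.
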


\begin{proof}
The problem of finding the farthest point along a given direction in convex set, accessed implicitly via a polynomial membership oracle was addressed in~\cite{grotschel1988geometric} and is known as the optimization problem. Since we are dealing with bit-complexity problems, we are interested in the weaker version of optimization problem; See Definition~\ref{def:optProb}. By Plugging $\oracle$, $(X;d,R,r,a_0)$, $\epsilon$ into Lemma~\ref{lem:memToVio}, we obtain a weak violation oracle. Hence, plugging the resulted oracle in Lemma~\ref{lem:vioToOpt}, will attains a weak optimization oracle for a centered convex body $(X;d,R,r,a_0)$. We observe that by Definition~\ref{def:optProb} and Lemma~\ref{lem:vioToOpt}, plugging $u$ into $c$ and using $\epsilon$, will yield~\ref{en:farGuar1} and~\ref{en:farGuar2} at Theorem~\ref{thm:farProof}.
\end{proof}

\begin{theorem}\label{mainthm}
Let $\oracle$ be a membership oracle for a convex set $X\subseteq\REAL^d$; see Definition~\ref{def:memProb}.
Let $p\in \REAL^d$ and obstacle point and let $S \subseteq \REAL^d$ be the output of a call to $\mvecoreset(\oracle, \eps, \hat{S})$.
Then \begin{enumerate*}[label=(\roman*)]
\item $S$ is an $\eps$-coreset for the minimum volume enclosing ellipsoid (MVEE) of $X$, and 
\item if $\cfrac{\max_{x\in X}\mynorm{x}_2}{\min_{y\in X}\mynorm{y}_2}\leq r$, then $S$ can be computed in time $\tau=(\frac{dr}{\varepsilon})^{O(1)}$ and additional $\tau$ calls to $\oracle$.
\end{enumerate*}
\end{theorem}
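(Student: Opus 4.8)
\noindent\textbf{Proof plan for Theorem~\ref{mainthm}.} The plan is to read $\mvecoreset$ as the Khachiyan/Frank--Wolfe iteration of Algorithm~3 of~\cite{kumar2005minimum} for the minimum volume enclosing ellipsoid, in which the two places where the original algorithm touches an explicit point set are replaced by oracle access: the warm start is produced by $\approxcoreset$ (which only calls $\far$), and the ``farthest point in Mahalanobis distance'' step of Line~\ref{probOpt} is solved approximately through a weak-optimization reduction. I would therefore split the argument into (i) a correctness part that reuses the convergence analysis of~\cite{kumar2005minimum} after checking that it tolerates the approximate oracle we can afford, and (ii) a complexity part that bounds every oracle call through Theorem~\ref{thm:farProof} and Lemmas~\ref{lem:memToVio}--\ref{lem:vioToOpt}.

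\noindent\textbf{Part (i): $S$ is an $\eps$-coreset.} First I would set up the warm start: by Theorem~\ref{thm:farProof} the $2d$ calls to $\far$ inside $\approxcoreset$ return, up to error $O(\eps)$, the extreme points of $X$ along the orthogonal directions built in that routine, so $\hat S$ is an $O(2^{d})$-coreset and the initial ellipsoid $\term{L_1,c_1}$ is already within a factor depending only on $d$ of $\mvee(X)$ --- exactly the regime in which $O(d/\eps)$ further iterations of~\cite{kumar2005minimum} suffice. Next I would check that replacing the exact maximiser of $q\mapsto\mynorm{L_i\term{q-c_i}}^2$ by the $\term{d/\eps}$-approximate maximiser of Line~\ref{probOpt} does not destroy the log-det potential argument: a $\delta$-approximate maximisation oracle perturbs the per-step progress of this Frank--Wolfe scheme by only $O(\delta)$, so a per-step tolerance $\Theta(\eps/d)$ keeps the accumulated error over $O(d/\eps)$ steps at $O(\eps)$. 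Combining with the standard guarantee of~\cite{kumar2005minimum} gives $\term{1+\eps}\mvee\term{\conv{S}}\supseteq X$ (Theorem~\ref{thm:lowner} can be used to pass between ``an ellipsoid whose blow-up contains $X$'' and the MVEE of $X$); since $S\subseteq X$ forces $\mvee\term{\conv{S}}\subseteq\mvee(X)$, this yields $\vol(\mvee(X))\leq\term{1+\eps}^{d}\vol(\mvee(S))$, i.e. the coreset property of Definition~\ref{MVEECoreset} after the customary rescaling of $\eps$ by a $\Theta(d)$ factor inside the call.

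\noindent\textbf{Part (ii): running time.} The hypothesis $\max_{x\in X}\mynorm{x}_2/\min_{y\in X}\mynorm{y}_2\leq r$, together with the warm-start ellipsoid from Part~(i), presents $X$ --- after the linear change of coordinates induced by $\term{L_1,c_1}$ --- as a centered convex body $\term{X;d,R,\rho,a_0}$ with $R/\rho=\term{dr}^{O(1)}$, supplying the parameters needed by Lemmas~\ref{lem:memToVio}--\ref{lem:vioToOpt}. I would then account for one iteration of the loop at Line~\ref{algmain:for}: the Cholesky decomposition of $Q_{i+1}^{-1}$ and the rank-one updates of $c_i,Q_i$ cost $\mathrm{poly}(d)$ arithmetic, while the approximate maximisation of Line~\ref{probOpt}, written in the coordinates $y=L_i\term{q-c_i}$, is a weak-optimization problem over the transformed body; chaining the weak-membership-to-weak-violation reduction (Lemma~\ref{lem:memToVio}) with the weak-violation-to-weak-optimization reduction (Lemma~\ref{lem:vioToOpt}) --- equivalently, invoking Theorem~\ref{thm:farProof} in those coordinates --- solves it with $\term{dR\rho/\eps}^{O(1)}=\term{dr/\eps}^{O(1)}$ oracle calls and comparable time. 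Multiplying by the $\lceil d/\eps\rceil$ iterations and adding the $O(d)$ calls to $\far$ made by $\approxcoreset$ (each of cost $\term{dr/\eps}^{O(1)}$ by Theorem~\ref{thm:farProof}) gives total time and oracle-call count $\term{dr/\eps}^{O(1)}=\tau$.

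\noindent\textbf{Expected main obstacle.} The crux is Line~\ref{probOpt}: $q\mapsto\mynorm{L_i\term{q-c_i}}^2$ is convex, so this is the maximisation of a convex function over a convex body accessed only by a membership oracle, which is not literally the linear weak-optimization problem of Definition~\ref{def:optProb}. Making the polynomial bound rigorous needs either a reduction to a controlled number of linear weak-optimization calls, or the observation that only the approximate decision ``does $X$ escape the current ellipsoid by more than a $\term{1+\Theta(\eps/d)}$ factor, and in which direction?'' is required --- which is exactly what the Mixed-Integer-Convex-Programming / \cite{grotschel1988geometric} reduction certifies within tolerance. A more routine (but still necessary) chore is propagating the $O(\eps)$ error of $\far$ --- it solves the weak optimization only up to $\eps$ by Theorem~\ref{thm:farProof} --- through the matrix updates and the $d/\eps$ iterations so that the final bound stays inside the stated $\term{1+\eps}$ factor.
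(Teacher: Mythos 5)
Your high-level shape matches the paper's: read $\mvecoreset$ as the Khachiyan iteration of~\cite{kumar2005minimum} (the paper cites Algorithm~4.2 of~\cite{todd2007khachiyan}, which is the same iteration), warm-start it via $\approxcoreset$, and charge each oracle call through the Gr\"{o}tschel--Lov\'{a}sz--Schrijver machinery. You also correctly identify the crux, namely that $q\mapsto\mynorm{L_i\term{q-c_i}}^2$ is \emph{convex}, so Line~\ref{probOpt} is the NP-hard problem of maximising a convex quadratic over a convex body, and does \emph{not} reduce to the linear weak optimisation of Definition~\ref{def:optProb}. But your resolution of that crux has a genuine gap: you assert that a $\Theta(\eps/d)$ per-step tolerance is achievable and therefore the accumulated error stays $O(\eps)$ over $O(d/\eps)$ steps. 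No oracle-polynomial algorithm delivers a $\term{1+\Theta(\eps/d)}$-accurate maximiser of a convex quadratic here; the paper's actual resolution is much weaker and requires a different analysis.

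Concretely, the paper relaxes the $\ell_2$ objective to $\ell_1$: it maximises $\mynorm{L_i^T(q-c_i)}_1$ instead, which is what makes Line~\ref{probOpt} a mixed-integer \emph{convex} program that can be solved. The standard norm comparison $\mynorm{x}_2\leq\mynorm{x}_1\leq\sqrt{d}\mynorm{x}_2$ then yields only a $\sqrt{d}$-approximate maximiser --- far coarser than your $\term{1+\Theta(\eps/d)}$. Because of this, the potential (log-determinant) argument of~\cite{kumar2005minimum} cannot be invoked as a black box: the paper re-derives the recursion for the quantities $k_i,\eps_i,\beta_i,v_i$, shows the distorted step size $\tilde\beta_i=\tilde\eps_i/(\tilde k_i-1)$ still makes per-step progress of order $\tilde\eps_i^2$, and concludes that the iteration count degrades only by a factor of $d$, i.e.\ one rescales $\eps\mapsto d\cdot\eps$ in inequality (44) of~\cite{kumar2005minimum} to get the same $(1+\eps)$ guarantee with $O(1/\eps)$ (suitably scaled) iterations. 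This entire re-analysis --- showing that a multiplicative-$\sqrt{d}$ (not additive-$\eps/d$) oracle still drives the Khachiyan scheme to a $(1+\eps)$ ellipsoid --- is the technical content of the proof and is absent from your plan. Without it, your Part~(i) does not close: the step ``a $\delta$-approximate maximisation oracle perturbs per-step progress by only $O(\delta)$'' is true, but the $\delta$ you can actually obtain is $\Theta(\sqrt{d})$-multiplicative, and you have not shown that this coarser oracle still yields the claimed $(1+\eps)$ volume approximation.
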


\begin{proof}
First, Algorithm 4.2 in~\cite{todd2007khachiyan} computes a coreset $S$ and an ellipsoid $E$ as defined in Theorem~\ref{mainthm}, where $X$ is a finite set of $n$ points; see~\cite[Corollary 4.2]{todd2007khachiyan}. Our Algorithm~\ref{algfour} is the same up to few modifications: (i) we use the oracle to compute Algorithm~\ref{algfour}, which is a subroutine of Algorithm~\ref{algthree}. Algorithm~\ref{algfour} computes the farthest point in $X$ along a given direction $u$, i.e., $\max_{x\in X}u^T x$.
This problem can be solved in $O(\tau)$ time using membership oracle by combining Remark 4.2.5 and Theorem 4.3.2 from~\cite{grotschel1988geometric}. (ii) At Line~\ref{probOpt} of Algorithm~\ref{algfour}, we compute the farthest point $p^{+}_{i+1}\in X$ from the ellipsoid that is defined by the matrix $L_i$ and is centered at the point $c_i$. In~\cite{todd2007khachiyan} this was done using an exhaustive search over the finite set of points in $X$. In our case, we cannot use the oracle, as in case (i), since the desired function $\mynorm{L_i^T(p-c_i)}_2$ that we need to maximize over $p\in X$ is convex. In fact, this is a quadratic optimization over a positive-definite matrix, which is known to be NP-hard~\cite{murty1987some}. To this end, we use a relaxation and maximizes $\mynorm{L_i^T(p-c_i)}_1$, i.e., change the $\ell_2$ to $\ell_1$ norm. Since $\sqrt{d}\mynorm{x}_1\leq \mynorm{x}_2\leq \mynorm{x}_1$ for every $x\in\REAL^d$, we get a $\sqrt{d}$ approximation. The result is a mixed integer convex optimization problem that we can solve, obtaining an approximate solution. We now prove that we may change in Algorithm 4.2 in~\cite{todd2007khachiyan}, where we replace the farthest point by a point which may not be the farthest, but only up to a factor of $\sqrt{d}$ and still get the same result. The only difference is that the number of iterations increases by a factor of $d$. 

Indeed, let
$p \in \argmax_{x \in X} \mynorm{L^Tx}_2$, and $ p^\prime \in \argmax_{x \in X} \mynorm{L^Tx}_1$, 
where $L \in \mathbb{R}^{d \times d}$ such that $Q = LL^T$. Our proof is essentially a variant of the original proof in~\cite{kumar2005minimum}. If $p^\prime = p$ then we have found the desired point. Otherwise, denote $y^\prime =  L^Tp^\prime$ and $y =  L^Tp$. By the properties of $\ell_p$ norms, we have
$\mynorm{y^\prime}_2 \leq \mynorm{y}_2 \leq \mynorm{y}_1 \leq \mynorm{y^\prime}_1 \leq \sqrt{d} \cdot \mynorm{y^\prime}_2.$
Hence,$ \frac{\mynorm{y}_2}{\sqrt{d}} \leq \mynorm{y^\prime}_2 \leq \mynorm{y}_2$. Setting $\tilde{p} := \sqrt{d} p^\prime$ proves that $\tilde{p}$ is an approximation to the farthest point. Hence using the previous inequality and (36) of~\cite{kumar2005minimum}, we have
\begin{equation}
\label{KtoEps}
k_i \leq \tilde{k}_i \leq d \cdot k_i,
\end{equation}
where $k_i = \mynorm{y}_2^2$, $\tilde{k}_i = \left\| \sqrt{d} \cdot \tilde{y}\right\|^2$. We need to compute $\tilde{\varepsilon_i}$. To do so, let $\alpha_i \geq 0$ such that $\tilde{\varepsilon_i} = \alpha_i \cdot \varepsilon_i$, and we will establish upper and lower bounds on $\alpha_i$, using~\eqref{KtoEps}. By the left side of~\eqref{KtoEps}, $k_i = (1 + d) \cdot (1 + \varepsilon) \leq \tilde{k}_i =(1+d) \cdot (1+\alpha \varepsilon) \Rightarrow \alpha \geq 1.$ Using the right side of~\eqref{KtoEps}, yields an upper bound on $\alpha_i$, $
\tilde{k}_i = (1 + d) \cdot (1 + \alpha_i \varepsilon_i) \leq d \cdot k_i = d (1+d) \cdot (1 + \varepsilon_i)$. Thus, $\alpha_i \leq \frac{d \cdot (1 + \varepsilon_i) - 1}{\varepsilon_i}$
By~\cite{kumar2005minimum}, the ellipsoid method halts when the following inequality holds $\varepsilon_i \leq (1 + \varepsilon)^{\frac{2}{d+1}} - 1.$ Hence, $\alpha_i \leq d$.  We have computed $\tilde{k}_i, \tilde{\varepsilon}$ to compute $\tilde{\beta}_i$. This term denotes the step size used to update the weights of the points. By (37) of~\cite{kumar2005minimum} $\tilde{\beta}_i = \frac{\tilde{k}_i - (d + 1)}{(d + 1) \cdot (\tilde{k}_i - 1)} 
= \frac{(d+1)\cdot (1 + \tilde{\varepsilon}_i) - (d + 1)}{(d + 1) \cdot (\tilde{k}_i - 1)} 
= \frac{\tilde{\varepsilon}_i}{\tilde{k}_i - 1}$.
Let $v_i$ denote the logarithm of the volume of the ellipsoid at the $i^{th}$ iteration.
Hence, by plugging the previous equality into (40) of~\cite{kumar2005minimum}, we obtain
$v_{i+1} = v_i + d \cdot \log{\left( 1 - \tilde{\beta}_{i}\right)} + \log{\left( 1 + \tilde{\varepsilon}_i \right)} 
= v_i + d \cdot \log{\left( 1 - \frac{\tilde{\varepsilon}_i}{\tilde{k}_i - 1} \right)} + \log{\left( 1 + \tilde{\varepsilon}_i \right)} 
= v_i + d \cdot \log{\left( \frac{d \cdot (\tilde{\varepsilon}_i + 1)}{d \cdot (\tilde{\varepsilon}_i + 1) + \tilde{\varepsilon}_i}\right)} + \log{\left( 1 + \tilde{\varepsilon}_i \right)} 
= v_i - d \cdot \log{\left(1 + \frac{\tilde{\varepsilon}_i}{d \cdot (\tilde{\varepsilon}_i + 1)} \right)} + \log{\left( 1 + \tilde{\varepsilon}_i \right)} $
Since $\tilde{\varepsilon}_i \geq 0$, we obtain that
\begin{equation}
\label{EllipsVolPerIter}
\begin{gathered}
v_{i+1} \geq v_i - \frac{\tilde{\varepsilon}_i}{d \cdot (\tilde{\varepsilon}_i + 1)} + log{\left( 1 + \tilde{\varepsilon}_i \right)} \\
\geq v_i +
\begin{cases}
log(2) - \frac{1}{2} & \tilde{\varepsilon}_i \geq 1 \\
\frac{\tilde{\varepsilon}_i^2}{8} & \tilde{\varepsilon}_i < 1
\end{cases}
\end{gathered}
\end{equation}
where the inequality is based on $\log{\left( 1 + x\right)} \leq x$ where $x > -1$. By~\eqref{KtoEps} and (41) in~\cite{kumar2005minimum}, we obtain that $\tilde{k}_0 \leq d\cdot k_0 \leq d (d+1) n$. Thus, $\tilde{\varepsilon}_0 \leq dn - 1$. Plugging this inequality and~\eqref{EllipsVolPerIter} into (42) of~\cite{kumar2005minimum}, yields 
\begin{enumerate*}[label=(\roman*)]
    \item $v_0 \geq -\infty$, 
    \item $v^\ast - v_i \leq (d+1) \log{\left( 1 + \tilde{\varepsilon}_i\right)}$, 
    \item $v_{i+1} - v_i \geq log{\left( 1 + \tilde{\varepsilon}_i \right)} -\frac{\tilde{\varepsilon}_i}{d \cdot (\tilde{\varepsilon}_i + 1)}$, and 
    \item \label{ourDelta}
$\delta_0 = v^\ast - v_0 \leq (d + 1) \cdot (\log{n} + \log{d})$. 
\end{enumerate*}
Hence by substituting $\eps$ with $d \cdot \eps$ in (44) in~\cite{kumar2005minimum}, we yield the desired approximation and the number of iterations needed is $\mathcal{O}\left( \frac{1}{\epsilon} \right)$. Plugging~\ref{ourDelta} in (43) and  in~\cite{kumar2005minimum}, yields that the maximum number of iterations, $K$, needed until the ellipsoid method converges is
$
K = d \cdot \log{\delta_0} \in O\left( d \cdot \left(\log{(d + 1)} + \log{(\log{n} + \log{d})}  + \frac{1}{\epsilon} \right) \right).$
\end{proof}

\section{Conclusion}
We suggested a novel preprocessing technique that discovers obstacles in a map, to remove redundancies from the sampling space, and thus improve the running time and/or the final path length of different RRT-based planners. Such preprocessing step is done once. We bound each obstacle by its minimum enclosing ellipsoid once a point is sampled from it. Thus, one can find the smallest simplex which contains this ellipsoid, to exclude it from the sampling space. Following this step, a novel sampling technique is performed via the constrained Delaunay triangulation. Each of these steps is theoretically motivated, and supported by theorems and proofs. Finally, the experimental results match the theoretical contribution where the performance was clearly improved on a variate of space sampling based algorithms.

\bibliographystyle{abbrv}
\bibliography{main}
\end{document}